\documentclass[sigconf,nonacm]{acmart}
\usepackage{bm}
\usepackage[inline]{enumitem}
\usepackage{subcaption}
\microtypesetup{expansion=false,protrusion=false}
\usepackage{xspace}
\usepackage{multirow}
\usepackage{amsmath}
\usepackage{booktabs}
\usepackage{tabularx}
\usepackage{adjustbox}
\usepackage{float}
\usepackage{graphicx}
\usepackage{caption}
\usepackage{textcomp}
\usepackage{hyphenat}
\usepackage{newunicodechar}
\newunicodechar{Δ}{\ensuremath{\Delta}}
\usepackage{tikz}
\usetikzlibrary{arrows.meta,positioning,fit,calc,decorations.pathreplacing}

\newtheorem{theorem}{Theorem}
\newtheorem{proposition}{Proposition}
\newtheorem{lemma}{Lemma}
\newtheorem{corollary}{Corollary}
\newtheorem{definition}{Definition}

\AtBeginDocument{%
  }

\begin{document}
\raggedbottom


\title{Decoupled Learning and Selection in Slate Recommendation for Privacy and Stability Under Noisy Scores}
\renewcommand{\shorttitle}{Privacy and Stability Under Noisy Scores}

\begin{abstract}
We formalize slate recommendation as a randomized score learner followed by deterministic selection. First, an appropriately scoped differential-privacy guarantee passes through selection and its audit trace by post-processing. End-to-end privacy holds only when selector inputs are public or independent, previous private outputs, or separately privacy-accounted; fixing raw state or candidate information instead yields only a conditional guarantee. Second, we derive a logged margin certificate: bounded score-induced objective movement below half the smallest greedy decision margin guarantees that the ordered slate is unchanged.

Controlled fixed-margin tests show near-linear exponent scaling, with an empirical slope of $-0.220$ (95\% CI $[-0.231,-0.210]$) against the independent-noise reference $-1/4$. Real-anchor experiments on OULAD, MovieLens-25M, and Amazon Musical Instruments show that greater anchor weight reduces score-noise-induced ranking churn. OULAD and EdNet certificate checks validate the implementation of the logged inequality, while closed-loop simulations show bounded target drift and setting-dependent downstream utility. The contribution is therefore a privacy-scope contract and a certifiable score-to-slate stability mechanism, not a universal utility claim.
\end{abstract}

\keywords{slate recommendation, re-ranking, differential privacy, post-processing, auditability, exploration, diversification, governance}


\author{Sam Urmian}
\orcid{0000-0001-8106-2198}
\affiliation{%
  \institution{University of Bergen}
  \department{Centre for the Science of Learning \& Technology (SLATE)}
  \city{Bergen}
  \country{Norway}}
\email{sam.urmian@uib.no}

\author{Qinyi Liu}
\orcid{0009-0003-4973-0901}
\affiliation{%
  \institution{City University of Macau}
  \department{School of Education}
  \city{Macau}
  \country{China}}
\email{qyliu@cityu.edu.mo}

\author{Mohammad Khalil}
\orcid{0000-0002-6860-4404}
\affiliation{%
  \institution{University of Bergen}
  \department{Centre for the Science of Learning \& Technology (SLATE)}
  \city{Bergen}
  \country{Norway}}
\email{mohammad.khalil@uib.no}

\renewcommand{\shortauthors}{Urmian et al.}

\maketitle

\section{Introduction}
\label{sec:intro}

When a recommender shows a list of items, the list is usually not just the direct output of a learned model. A model may first score candidate items, while a second decision layer removes restricted items, balances content, adds diversity, applies operational rules, and chooses the final list length. This second layer is often deterministic: given the same scores and inputs, it will produce the same slate. Yet guarantees for the learned model---differential privacy~\cite{McSherryMironov2009,Abadi2016}, contextual-bandit regret~\cite{Li2010,Chu2011}, fairness, or calibration~\cite{Steck2018,SinghJoachims2018,Geyik2019}---do not automatically describe this rule layer.

This separation matters because different questions attach to different parts of the pipeline. Privacy asks what the training data can reveal. Auditability asks whether a recommendation can be replayed from logs. Stability asks whether small score changes can alter the final slate. Treating the whole recommender as one learned object makes these questions harder to separate, and can make guarantees look broader than their assumptions allow.

We therefore study the two-step setting directly. A randomized learning operator $\mathcal{L}$ produces item scores, and a deterministic selection operator $\mathcal{S}$ maps those scores, state, and bounded side inputs to a slate. The paper is not a proposal for a new recommender library or a claim that this structure is universally optimal. It asks a narrower question:

\begin{quote}
\emph{If a slate recommender is constrained to learn scores first and then choose slates deterministically, what privacy, auditability, drift, and stability guarantees follow, and under which assumptions?}
\end{quote}

Section~\ref{sec:framework} makes this model class precise. The key distinction is between an \emph{end-to-end private regime}, where selector inputs are public or independent, previous DP outputs, or separately privacy-accounted, and an \emph{anchored regime}, where the selector may also consume a stable auxiliary score. Fixing raw state or candidate information yields only a conditional privacy statement. A non-private anchor can support stability analysis, but it does not preserve end-to-end DP.

The main technical object is a margin-based stability certificate. For deterministic selectors that build a slate through a finite sequence of score-dependent choices, positive stepwise margins certify that bounded score-induced perturbations cannot change the ordered slate. The certificate is auditable because margins and perturbation envelopes are logged decision-trace quantities. It specializes to a raw top-$K$ flip bound under pairwise score-difference noise and to a final-slate certificate for the greedy shaped/diversity-aware selector. These are stability statements, not privacy statements. Privacy inheritance is handled separately by standard post-processing and group-privacy arguments under the admissibility contract.

Our contributions are:
\begin{enumerate*}[label=(\roman*)]
  \item a privacy-scope contract for deterministic slate post-processing and audit traces;
  \item a logged margin certificate for invariance of an ordered greedy slate; and
  \item mechanism-focused tests across four datasets, plus scoped closed-loop simulations that separate score stability from downstream utility.
\end{enumerate*}

The closed-loop testbed is adaptive practice: items are practice questions, $d(i)\in[0,1]$ is a difficulty proxy, and $\Delta_t$ implements pacing. OULAD and EdNet provide contrasting interaction density, while MovieLens-25M and Amazon Musical Instruments test whether the raw score-stability behavior appears outside education.

\noindent\textbf{Reproducibility.}
Diagnostic code, experiment artifacts, and full proof details are available at the public repository
\url{https://github.com/mlgorithm/decoupled-slate-recommendation-recsys-2026}~\cite{Urmian2026Artifacts}.

\section{Setting and Model Class}
\label{sec:framework}

We study slate recommendation through a simple separation: one component \emph{learns} scores from data, and a second component \emph{selects} the slate shown to the user. The separation is analytical rather than product-specific. It lets us ask which properties come from the learner, which come from the selector, and which require assumptions about both.

\subsection{Setting}
\label{subsec:setting}

At each round, a policy receives a candidate pool of eligible items and chooses a slate to display. The user then produces round-level feedback: whether any shown item was attempted and, conditional on an attempt, whether the response was successful. Candidate features, target-value proxies such as difficulty or freshness, similarities, and uncertainty estimates are treated as selector inputs when available.

The protected unit for privacy is one exposure round: a user, the shown slate, the round index, and the observed feedback. If an implementation expands a slate into item rows for optimization, those rows are still treated as one protected round record. Record-neighboring datasets differ in one such exposure round; user-neighboring datasets differ in all exposure rounds belonging to one user.

A randomized learner $\mathcal{L}$ is $(\varepsilon,\delta)$-differentially private (DP) if, for all record-neighboring $D,D'$ and events $E$, $\Pr[\mathcal{L}(D)\in E]\le e^\varepsilon\Pr[\mathcal{L}(D')\in E]+\delta$~\cite{Dwork2014}. DP-SGD implements this guarantee for gradient training by clipping each per-example gradient to norm $C$, adding calibrated Gaussian noise, and accounting for privacy loss across updates~\cite{Abadi2016}.

\begin{definition}[Privacy status of selector-side inputs]
\label{def:admissible}
A selector-side input has one of four privacy statuses relative to protected data $D$: (i) public or independent of $D$; (ii) a previous DP output or deterministic post-processing thereof; (iii) a separately DP-accounted output, whose budget must be composed; or (iv) an explicitly conditioned input. The first three can support an end-to-end claim with the applicable composed budget. The fourth supports only a statement conditional on the fixed input and does not protect that input itself.
\end{definition}

\subsection{Two Operators: Learning and Selection}
\label{subsec:two-ops}

The model class has two operators.

\paragraph{Learning operator $\mathcal{L}$.}
The learner maps logged interactions to item scores and may be statistical, factorization-based, tree-based, neural, or bandit-based. The interface is architecture-independent, but its guarantees are not automatic: privacy inheritance requires a verified DP learner, while stability requires bounded score perturbations or the stated pairwise-tail condition. When exploration is used, $\mathcal{L}$ also exposes uncertainty widths. All recommender-controlled randomness covered by the analysis is confined to $\mathcal{L}$.

\paragraph{Selection operator $\mathcal{S}$.}
The selector is deterministic. Given a candidate pool, learner-derived scores, state, and bounded levers, it outputs a slate and an audit trace. Because the selector is deterministic, privacy can pass through it by post-processing, but its scope is determined by every non-learner input according to Definition~\ref{def:admissible}.
Figure~\ref{fig:dpr-two-ops} summarizes this two-operator contract.

\begin{figure}[t]
\centering
\includegraphics[width=\columnwidth]{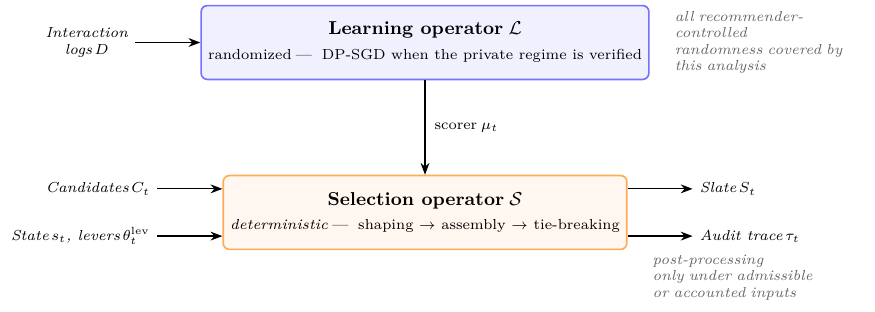}
\caption{The two-operator decomposition studied in this paper.
  $\mathcal{L}$ is the only source of learner randomness; $\mathcal{S}$ is deterministic
  given its inputs. Privacy passes to the full pipeline, including the audit trace,
  by post-processing only when the non-learner inputs satisfy the admissibility or
  DP-accounting conditions stated in Section~\ref{subsec:admissible}.}
\Description{Interaction logs enter a randomized learning operator, which outputs item scores to a deterministic selection operator. The selector also receives candidates, state, and bounded levers, and returns a slate plus an audit trace. A note limits the randomness claim to recommender-controlled randomness covered by the analysis, and another notes that post-processing privacy requires public, independent, prior-private, or separately accounted inputs.}
\label{fig:dpr-two-ops}
\end{figure}

This distinction creates three scopes. In the \emph{end-to-end private regime}, the selector consumes learner-derived scores plus only inputs of types (i)--(iii) in Definition~\ref{def:admissible}. A \emph{conditional regime} fixes inputs such as raw user state or candidate pools; the resulting guarantee does not protect those inputs. In the \emph{anchored regime}, the selector may also consume an auxiliary anchor score. A public, independent, or separately DP-trained anchor can be included in the privacy accounting. A non-private anchor trained on the protected dataset cannot: it may still support stability analysis, but the resulting pipeline is not end-to-end DP. The empirical anchor is non-private, and the closed-loop state is updated from raw simulated responses, so those experiments support stability and conditional mechanism checks rather than end-to-end privacy claims.

\subsection{Governed Policy Class}
\label{subsec:policy-class}

The governed policy class is the family of deterministic selectors we analyze. Each policy has the same public state interface and four bounded levers:
\begin{description}
\item[Target shaping.] Shifts the target value used by the proximity bonus, e.g., to study calibrated progression or difficulty targeting.
\item[Exploration.] Increases the value of items whose scores are more uncertain.
\item[Diversity.] Trades shaped relevance against similarity to items already chosen for the slate.
\item[Slate size.] Sets how many items are displayed in the round.
\end{description}

Figure~\ref{fig:dpr-three-stages} expands the default selector used in the closed-loop experiments. It first shapes scores with target and exploration terms, then greedily assembles the slate with diversity and novelty terms, and finally applies deterministic tie-breaking. The window-enforcing SCPO variant keeps the same learner, state, levers, and audit contract, but replaces the soft-window greedy assembly rule with a feasibility-constrained selector.

\begin{figure*}[t]
\centering
\includegraphics[width=0.95\textwidth]{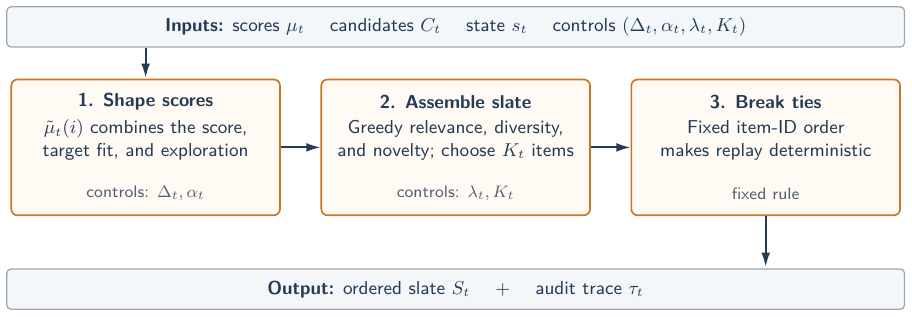}
\caption{Inside the selection operator $\mathcal{S}$: three deterministic stages.
  The shaping stage uses the score, target proximity, and exploration bonus; the
  assembly stage uses diversity, slate size, and novelty memory; tie-breaking is
  fixed so that audit replay is reproducible.}
\Description{Three-stage diagram. Inputs are scores, candidates, state, and selector controls. Shape scores uses delta and alpha, assemble slate uses lambda and K, and a fixed item-ID rule breaks ties. The output is an ordered slate and audit trace.}
\label{fig:dpr-three-stages}
\end{figure*}

The guarantees remain scoped. End-to-end privacy requires inputs of types (i)--(iii); type-(iv) inputs yield only a conditional statement. Drift requires window enforcement, or a logged soft-window event where selected items satisfy the window. Final-slate stability requires the margin certificate summarized in Section~\ref{sec:theory}.

\subsection{Auditability}
\label{subsec:audit}

The selector is replayable because its inputs and tie-breaking rule are fixed. For each round, the audit trace must contain, or allow deterministic lookup of, the candidate identities, levers, target value, state, novelty memory, learner-derived scores, shaped scores, target proxies, uncertainty widths, pairwise similarities, final slate, and tie-breaking rule. Given those inputs, replaying the selector reproduces the slate exactly.

This trace has two uses. First, it lets us attribute a decision either to the learned score or to an explicit selector lever. Second, it gives the quantities used by the stability certificate: margins and perturbation envelopes are decision-trace quantities, not hidden proof artifacts. The trace is private only under the same admissibility/accounting conditions as the selector itself.

At production scale, exact replay needs $O(|C_t|)$ scalar fields and at most $O(K_t|C_t|)$ queried similarities, not a dense $O(|C_t|^2)$ matrix; certificate-only monitoring stores $O(K_t)$ margins, $M_t$, and versioned input references. Selection already computes these values, so logging adds serialization and storage but no new asymptotic ranking pass. Both trace levels inherit the privacy requirements above.

\subsection{What This Model Class Buys}
\label{subsec:framework-whatbuys}

The two-operator restriction has four consequences, summarized in Section~\ref{sec:theory}. A private learner passes its privacy guarantee through deterministic selection when the side-information contract is satisfied. Record-level privacy can be lifted to user-level privacy by the standard group-privacy conversion. A window-enforcing selector gives a scorer-independent drift envelope. Finally, anchor blending can reduce raw top-$K$ score instability, and the selector-margin certificate explains when that score stability reaches the final greedy slate.

Sections~\ref{sec:methodology}--\ref{sec:results} evaluate these consequences with closed-loop runs on two educational corpora with contrasting user--item density, raw score-stability checks on two ratings corpora, and component-removal runs that attribute outcome changes to individual selector components.

\section{Related Work}
\label{sec:related}

This paper connects several lines of recommender-systems work that are usually analyzed separately. Differentially private recommendation protects the learned model or training procedure, from private collaborative filtering to DP-SGD-based recommenders~\cite{McSherryMironov2009,Abadi2016,Dwork2014,Liu2025DP}. Those results do not by themselves explain what happens after a private scorer is passed through top-$K$ selection, re-ranking, slate-size adaptation, or an audit log. Our privacy-inheritance result uses standard post-processing and group-privacy tools, but distinguishes end-to-end protection from a statement that merely conditions on raw selector inputs.

Deterministic re-ranking and slate construction methods, including MMR, calibrated recommendation, diversity, and fair exposure re-rankers~\cite{CarbonellGoldstein1998,KunaverPozrl2017,Steck2018,SinghJoachims2018,Geyik2019}, are the closest structural relatives of our selector. They usually study the re-ranking objective itself, while privacy and scorer-noise stability are outside the main analysis. Contextual bandits and private bandits~\cite{Li2010,Chu2011,Abbasi2011,Shariff2018,Mishra2015} provide exploration and regret guarantees, but couple exploration to a specific learner; our LinUCB result is therefore only a scoped recovery statement when the other selector terms are disabled. Safe and constrained RL methods for recommendation~\cite{Ie2019SlateQ,Achiam2017CPO} enforce constraints inside learned policies, whereas our drift certificate is a selector-side statement that can be checked without retraining the scorer. Off-policy learning and evaluation~\cite{Swaminathan2015CRM} are complementary ways to train $\mathcal{L}$ from biased logs rather than guarantees about the deterministic post-layer.

Adjacent stability work asks related but different questions. Rank-list sensitivity measures how recommendation lists react to training-interaction perturbations~\cite{Oh2022RankSensitivity}, while our object is a logged sufficient invariance certificate at the selector decision. Stability mappings for rankings under uncertain predictions study stable and fair ranking rules~\cite{Devic2024Stability}; we instead bound objective movement through deterministic greedy slate construction. Stability-based private top-$K$ selection uses stability to decide when a set can be released privately~\cite{ZhuWang2022PrivateK}; our direction is reversed, propagating an already-private learner through a deterministic selector under an explicit side-information contract.

The gap addressed here is the structural separation itself. In the private regime, privacy and audit replay follow from standard composition once the side-information contract is satisfied. In the anchored regime, a non-private auxiliary anchor gives no end-to-end DP guarantee, but it enables a separate score-stability analysis. The main technical contribution is therefore not a new scorer or a new re-ranking objective; it is the margin-certified link between score perturbations, top-$K$ ranking stability, and greedy-slate stability under named assumptions.

\section{Theoretical Guarantees}
\label{sec:theory}

This section states the guarantees and explains the intuition behind them. The two questions are separate. First, when does the full recommender inherit privacy from its learner? Second, when can we certify that score noise does \emph{not} change the slate that users see? End-to-end privacy is inherited only when every selector-side input is public or independent, a previous DP output, or separately privacy-accounted. Fixing raw side information yields only a conditional statement. Stability can also be studied in the anchored regime, but then it is a stability claim rather than an end-to-end privacy claim.

The main stability idea is simple. A deterministic selector is stable when the chosen item clearly beats the closest alternative. We call that separation the \emph{margin}. Score noise can move the selector objective; we call the largest such change the \emph{movement}. If the movement is less than half the margin, no runner-up can overtake the selected item. This is the core certificate behind the top-$K$ and greedy-slate results.

\begin{table}[t]
\caption{Reading the stability quantities.}
\label{tab:stability-quantities}
\small
\begin{tabular}{@{}p{0.18\columnwidth}p{0.76\columnwidth}@{}}
\toprule
Quantity & Plain meaning \\
\midrule
$\Gamma$ or $\Gamma_t$ & The smallest logged decision margin: how clearly the chosen item beat the closest runner-up. \\
$M$ or $M_t$ & The largest score-induced movement in the selector objective. \\
$M<\Gamma/2$ & A certificate that the ordered slate is unchanged. \\
$\gamma$ & The gap between anchor top-$K$ items and the remaining candidates. \\
$w$ & Anchor weight; larger $w$ gives the noisy adaptive score less influence. \\
\bottomrule
\end{tabular}
\end{table}

\subsection{Privacy Scope}
\label{subsec:admissible}

The privacy statements are privacy-inheritance results, not new DP mechanisms. If the learner $\mathcal{L}$ is DP, then any deterministic re-ranking of its output is post-processing and spends no additional privacy budget. The catch is that the selector may also use candidate pools, state, similarities, levers, novelty memory, or anchors. Definition~\ref{def:admissible} distinguishes inputs covered by an end-to-end claim from inputs that are merely fixed by conditioning.

The post-processing result formalizes this statement for our two-operator pipeline. The same logic applies to the audit trace: replay logs are private only when every non-learner entry is public or independent, derived from a previous DP output, or separately accounted. A non-private anchor trained on $D$ is therefore outside the private-regime claim, even if the downstream selector is deterministic. If each user contributes at most $B$ protected round records, record-level $(\varepsilon,\delta)$-DP lifts by group privacy to
\[
\left(B\varepsilon,\ \delta\sum_{j=0}^{B-1}e^{j\varepsilon}\right)\text{-DP}.
\]
The experimental protocol does not impose a uniform contribution cap $B$, so we do not report a single user-level budget; such a claim would require clipping each user's contribution first.

\subsection{Selector Drift}
\label{subsec:lem-drift}

The drift guarantee is a selector-side result. It does not depend on whether the scorer is neural, matrix-factorized, private, or non-private. If the selector enforces a window around the current target value, then the average target value of accepted items cannot jump arbitrarily from one round to the next. The envelope is
\[
|\bar d(A_{t+1})-\bar d(A_t)| \le \eta+2\Delta_{\max}+w_t+w_{t+1}.
\]

The meaning is direct: $\eta$ captures how fast the state estimate can move, $\Delta_{\max}$ caps how far the target lever can shift the target, and $w_t,w_{t+1}$ are the two window widths. The drift result is exact for a window-enforcing selector such as SCPO. For the default soft-window MMR selector, it applies only on logged rounds where the selected items actually satisfy the window condition.

\subsection{Margin-Certified Stability}
\label{subsec:thm-selector-stability}

\begin{theorem}[Margin-certified selector stability]
\label{thm:selector-stability-main}
Let a deterministic selector build an ordered slate in $K$ greedy steps using objective $F_q^\mu(i\mid S_{q-1})$. Under reference scores $\mu^0$, let $S_{q-1}^0$ be the prefix and $i_q^0$ the selected item at step $q$. Define
$\Gamma_q=F_q^{\mu^0}(i_q^0\mid S_{q-1}^0)-\max_{j\ne i_q^0}F_q^{\mu^0}(j\mid S_{q-1}^0)$ over remaining candidates, and $\Gamma=\min_q\Gamma_q$. Suppose $|F_q^\mu(i\mid S_{q-1}^0)-F_q^{\mu^0}(i\mid S_{q-1}^0)|\le M$ for every remaining $i$ along the reference trajectory. If $\Gamma>0$ and $M<\Gamma/2$, then $\mu$ and $\mu^0$ produce the same ordered slate.
\end{theorem}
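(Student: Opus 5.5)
Induction on the step index $q$, with the claim that under perturbed scores $\mu$ the greedy selector's prefix after $q$ steps equals the reference prefix $S_q^0$. The base case $q=0$ is trivial: both prefixes are empty. The point of inducting is that the movement bound $M$ is only assumed along the reference trajectory, i.e.\ for objectives evaluated at the reference prefixes $S_{q-1}^0$. So before using it at step $q$, we must already know that the perturbed run has reached exactly $S_{q-1}^0$. The induction hypothesis supplies this. In particular, the set of remaining candidates at step $q$ is the same under $\mu$ and $\mu^0$, since it is determined by the common prefix.

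\textbf{Inductive step.} Suppose the perturbed prefix equals $S_{q-1}^0$. Fix any remaining candidate $j\ne i_q^0$. Apply the movement bound twice, once for $i_q^0$ and once for $j$, and then the definition of $\Gamma_q$:
\[
F_q^{\mu}(i_q^0\mid S_{q-1}^0)-F_q^{\mu}(j\mid S_{q-1}^0)\ \ge\ \bigl(F_q^{\mu^0}(i_q^0\mid S_{q-1}^0)-M\bigr)-\bigl(F_q^{\mu^0}(j\mid S_{q-1}^0)+M\bigr)\ \ge\ \Gamma_q-2M.
\]
Since $\Gamma_q\ge\Gamma>2M$, the right-hand side is strictly positive. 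Hence $i_q^0$ is the unique maximizer of $F_q^\mu(\cdot\mid S_{q-1}^0)$ over the remaining candidates.

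Because the maximizer is strict, the fixed tie-breaking rule is never invoked, and the deterministic selector picks $i_q^0$. The perturbed prefix therefore becomes $S_{q-1}^0\cup\{i_q^0\}=S_q^0$, with $i_q^0$ placed in position $q$. This closes the induction. After $K$ steps the perturbed run and the reference run have chosen identical items in identical order, which is the claimed invariance of the ordered slate.

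\textbf{Where care is needed.} The only delicate point is making sure the hypotheses are used only where they hold. The bound $M$ says nothing about objectives evaluated at perturbed prefixes that differ from the reference ones. The induction avoids that case entirely, because the perturbed run never leaves the reference trajectory. Two further conventions should be stated explicitly:
\begin{enumerate*}[label=(\roman*)]
\item the candidate pool and slate size $K$ do not depend on $\mu$, or are fixed by the common prefix, so both runs have the same number of steps and the same remaining sets;
\item if only one candidate remains at some step, the maximum over $j\ne i_q^0$ is empty, and we take $\Gamma_q=+\infty$, so that step is trivially stable.
\end{enumerate*}
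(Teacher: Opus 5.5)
Your proof is correct and is essentially the paper's argument: induction on $q$ keeps the perturbed run on the common prefix $S_{q-1}^0$, so the envelope $M$ applies and each reference gap shrinks by at most $2M$, and $\Gamma-2M>0$ makes $i_q^0$ the unique maximizer. Your added remarks (the envelope holds only on the reference trajectory, the candidate pool and $K$ are fixed, and the single-candidate convention) are sensible clarifications that the paper leaves implicit.
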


\begin{proof}[Proof sketch]
Each candidate objective moves by at most $M$. Hence a reference gap can shrink by at most $2M$. Since $\Gamma-2M>0$, induction preserves every greedy choice and the ordered slate.
\end{proof}

The logged $\Gamma$ and a valid envelope $M$ make the certificate auditable; failure means non-certification, not necessarily a slate change. The supplement gives the full proof and corollaries.

\subsection{What Anchor Blending Buys}
\label{subsec:cor-topk}

For raw top-$K$ selection, the margin has an especially simple form: the anchor gap $\gamma$ between the original top-$K$ items and the remaining candidates. A top-$K$ flip can only happen if score noise pushes an outside item across that gap. The top-$K$ stability result turns this into an exponential bound. In the pairwise-noise form, the flip probability scales like
\[
K(|C|-K)\exp\!\left[-\frac{\gamma^2}{2(1-w)^2\tau^2}\right],
\]
where $\tau^2$ is the score-difference noise scale. Under independent coordinate noise, this reduces to the familiar $-\gamma^2/[4(1-w)^2\sigma^2]$ exponent.

The interpretation is the main point. Larger anchor gaps make the original top-$K$ harder to disturb. Larger anchor weight $w$ gives the noisy adaptive score less influence. Larger score-difference noise makes flips more likely. The biased-anchor version records that an imperfect anchor consumes part of the usable gap.

\subsection{From Raw Scores to Greedy Slates}
\label{subsec:selector-bridge}

The deployed selector is not raw top-$K$. It also uses target shaping, exploration, diversity, slate size, novelty memory, and deterministic tie-breaking. The selector-margin bridge connects score stability to final-slate stability for this greedy selector.

The bridge says to account for every way a score perturbation can move the greedy objective: score movement, bonus movement, and novelty/state movement. At a single round with fixed candidate identities, eligibility, state, bonus inputs, and novelty memory, the latter difference terms vanish. The implementation check uses the realized perturbation to compute the ex post envelope $M_t=(1-\lambda_t)\|\mu-\mu^0\|_\infty$; it is therefore a soundness check of the logged inequality, not a prospective probabilistic certificate. When chaining across rounds or allowing other inputs to change, their movement must also be bounded. The same half-margin rule then applies.

The top-$K$ and greedy certificates answer related but different questions. The top-$K$ result is a clean raw-ranking statement and is the easiest place to see the exponential anchor effect. The greedy result is closer to the actual slate users see, but it requires the richer decision trace and the observed stepwise margins. The high-probability greedy version follows when the score errors are sub-Gaussian.

\subsection{Scoped Recovery and Threat Model}
\label{subsec:prop-regret}

The LinUCB recovery result is a sanity check: if target shaping, diversity, novelty, and slate-size adaptation are disabled, the selector reduces to the standard LinUCB rule and recovers the usual regret rate. It is not a regret theorem for the full active-lever selector.

The threat model is therefore deliberately split. Privacy protects the learner output and deterministic post-processing only under the admissible-side-information contract. It does not hide on-screen exposures or raw user responses. Stability is independent of privacy: it can hold for a non-private anchored system, and it composes with DP only when the anchor is public, independent of $D$, or separately privacy-accounted.
\label{subsec:scope}

\section{Experimental Methodology}
\label{sec:methodology}

We use the experiments as mechanism checks rather than as a product benchmark. The closed-loop setting asks what happens when the learner--selector separation is imposed on adaptive practice recommendation; the raw score-ranking experiments test the stability theorem directly. The educational corpora are OULAD~\cite{Kuzilek2017} and EdNet~\cite{Choi2020}. OULAD is relatively dense (23{,}351 users, 188 items, 173{,}739 interactions, $39.58$ interactions per thousand possible user--item pairs), while EdNet is sparse but longer-horizon (296{,}701 users, 11{,}555 items, 23.4M interactions, $6.82$ interactions per thousand possible pairs). To test whether the score-stability pattern is education-specific, we also run raw ranking checks on MovieLens-25M~\cite{HarperK16} and SNAP Amazon Musical Instruments reviews.

\subsection{Closed-loop recommendation setting}
\label{subsec:method-overview}

At each round $t$, a policy sees a matched candidate pool $C_t$ (size 100 unless stated), returns a slate $S_t$, receives simulated item-level responses that are aggregated into the round feedback used by the state update, and then advances to the next round. Candidate pools and random seeds are matched across modes, so paired differences are attributable to the learner/selector choice rather than retrieval. Sessions run for $T=20$ rounds, with one $T=50$ longer-horizon variant. OULAD and EdNet do not log exposure sets or shown-but-skipped items, although impression-aware corpora exist in other domains, including MIND, ContentWise Impressions, and FINN.no Slates~\cite{PerezMaurera2025}. We therefore measure engagement and utility through a response simulator calibrated against item correctness, user correctness, and interaction-count aggregates; these checks support trend-level comparisons, not absolute field-effect claims.

\subsection{Modes and privacy}
\label{subsec:method-design}

The main modes separate scorer quality from selector behavior. \textsc{Adaptive} uses a neural scorer that maps the current learner-state representation and candidate-item features to $\mu_t^{\mathrm{neural}}(i)$, updates from observed round feedback, and feeds those scores to the deterministic selector. \textsc{ALS+Selector} uses the same selector with a fixed ALS scorer, isolating the post-layer. \textsc{Adaptive+ALS} blends a neural adaptive scorer with an ALS anchor,
\[
  \mu_t^{w}(i)=w\,\mu_t^{\mathrm{ALS}}(i)+(1-w)\,\mu_t^{\mathrm{neural}}(i),
\]
with anchor weight $w=0.75$ in the main anchored runs and additional $w$ sweeps in the diagnostics. For every direct stability diagnostic, the ALS and neural score vectors are separately standardized to zero mean and unit standard deviation within each candidate pool before blending; thus $w$ is a convex weight on a common within-pool orientation and scale. The released closed-loop artifacts do not record an equivalent scaling transform, so closed-loop values of $w$ are treated as configuration labels rather than cross-model calibration parameters. We use $w$ for anchor weight so it is not confused with the selector diversity lever $\lambda_t$ in Section~\ref{sec:framework}. Since the ALS anchor is trained non-privately on the protected data, this mode is an anchored-regime stability experiment, not an end-to-end DP pipeline. \textsc{Adaptive-SCPO} replaces soft-window greedy selection with a window-enforcing selector for the exact drift certificate. Baselines include \textsc{Fixed}, \textsc{Greedy-Target}, \textsc{LinUCB}, pure \textsc{ALS}, and three educational state trackers (BKT, DKT, SAKT) used only as scorer-interface comparators.

The experimental harness records three nominal privacy configurations: standard ($C=1,\sigma=1.2,q=0.020,\delta=10^{-5}$), strong ($C=1,\sigma=2.0,q=0.015,\delta=10^{-5}$), and locked ($C=1,\sigma=3.0,q=0.010,\delta=10^{-5}$). The associated accountant labels at 20 updates are approximately $0.36$, $0.16$, and $0.07$. The released artifact, however, does not establish the sampling mechanism, population denominator behind $q$, or per-round gradient aggregation before clipping. We therefore report these only as configured-update stress-test labels, not independently verified $(\varepsilon,\delta)$ guarantees. The formal privacy inheritance result applies when the learner implementation clips at the declared protected unit, adds calibrated noise under a specified sampler/accountant, and accounts for every adaptive update and release.

\subsection{Direct theorem checks}
\label{subsec:method-setup}

The main stability theorem is tested outside the simulator first. We construct fixed-margin pools with $|C|=100$, $K=10$, controlled top-vs-rest gaps, anchor weights $w\in\{0,0.50,0.75\}$, and Gaussian score perturbations. For independent Gaussian noise, the fitted slope of
\[
  \log(\widehat p_{\mathrm{flip}}/[K(|C|-K)])
  \quad\text{against}\quad
  \gamma^2/((1-w)^2\sigma^2)
\]
tests the $-1/4$ exponent in the top-$K$ corollary. Each cell contains 64 pools $\times$ 500 perturbations, or 32{,}000 trials. The descriptive OLS fit uses exactly the eight cells whose smoothed Flip@$K$ lies in $[10^{-4},0.25]$; this analysis rule was not preregistered. Its slope standard error is 0.0042 and its two-sided 95\% $t$ interval is $[-0.231,-0.210]$. A low-rank correlated-noise version fits the generalized pairwise form using the empirical $\hat\tau^2$ scale. We then repeat real-anchor Flip@$K$ sweeps on OULAD, MovieLens-25M, and Amazon Musical Instruments, using independent and low-rank correlated perturbations. The ratings datasets are binarized at rating $\ge 4$; threshold sensitivity checks preserve the same qualitative pattern.

Two diagnostics connect the theorem to the implemented selector. The pairwise-tail diagnostic samples cross-boundary item pairs, normalizes score differences by empirical $\hat\tau$, and checks the sub-Gaussian upper-tail envelope. The selector-margin diagnostic fixes candidate identity and eligibility, state, bonus inputs, novelty memory, and selector parameters; after drawing a realized Gaussian score perturbation $\xi$, it computes $M_t=(1-\lambda_t)\|\xi\|_\infty$ and checks whether any case satisfying $M_t<\Gamma_t/2$ changes the slate. It is therefore an ex post certificate-implementation check. A separate 200-seed learner-update diagnostic on fixed OULAD pools is descriptive only: its 160{,}000 pair draws share training seeds, pools, and item scores and are not independent inferential observations.

\paragraph{Reproducibility details.}
The headline closed-loop summaries contain 256 matched learner-sessions per mode across four response models and four user profiles. Data are split by user with seed 42; candidate pools contain 100 items and are generated once per round for replay across modes. Sessions use 20 updates unless the 50-round variant is named. The adaptive scorer is a two-tower network (96 hidden and 48 embedding dimensions in the closed-loop configuration), and the fixed factorization scorer uses 32 latent dimensions and five training epochs. Simulator validation and preprocessing details are in the supplement; fixed-margin trials and real-anchor seed counts are stated above and in the released manifests.

\subsection{Outcome metrics and statistics}
\label{subsec:method-metrics}

For reference top-$K$ set $T_r^0$ and compared set $T_r$ in trial $r$, the raw ranking metrics are
\[
\mathrm{Flip@}K=\frac{1}{N}\sum_{r=1}^N\mathbb{I}\{T_r\ne T_r^0\},\qquad
\mathrm{Jaccard@}K=\frac{1}{N}\sum_{r=1}^N\frac{|T_r\cap T_r^0|}{|T_r\cup T_r^0|}.
\]
Flip@$K$ estimates the frequency of any top-$K$ set change; Jaccard reports its magnitude when flips saturate. Closed-loop metrics are attempts per round, item-level uptake, round-level correctness, a fixed pre/post utility probe, target-value jump $|\bar d(A_{t+1})-\bar d(A_t)|$, slate overlap, and Kendall-$\tau$ rank agreement. The probe is the change in mean Rasch/IRT expected correctness over the same fixed difficulty grid $d\in[0,1]$ before and after a session; its raw range is $[-1,1]$, and reported deltas use the $\times10^{-3}$ scale. All closed-loop comparisons are paired at the session level under matched seeds and candidate pools; confidence intervals are bootstrap 95\% intervals. Sign-test $p$-values are exploratory descriptive checks without multiplicity correction and therefore do not support family-wise confirmatory claims.

\section{Results}
\label{sec:results}

The results separate direct theorem checks from downstream closed-loop behavior. The controlled fixed-margin experiment shows the predicted linear scaling, although its fitted independent-noise slope is shallower than the exact reference. Real-anchor sweeps show the same monotone pattern across one educational dataset and two ratings datasets, pairwise-tail diagnostics check the noise condition, and selector-margin gates on OULAD and EdNet serve as implementation checks of the logged certificate. The closed-loop experiments are more mixed: they ask what happens when the stability mechanism is embedded inside a simulator with target shaping, diversity, slate-size adaptation, and response feedback.

\begin{table*}[t]
\centering
\caption{Main evidence map. Each row links one claim to the diagnostic that tests it most directly; downstream utility is reported separately because it also depends on simulator dynamics and optimization.}
\label{tab:main-evidence-map}
\scriptsize
\begin{tabular}{@{}p{0.18\textwidth}p{0.29\textwidth}p{0.46\textwidth}@{}}
\toprule
Claim being tested & Diagnostic & Strongest result \\
\midrule
Top-$K$ exponent & Fixed-margin calibration under independent and low-rank correlated score noise & Independent noise: slope $-0.220$ (95\% CI $[-0.231,-0.210]$) vs. reference $-0.250$ ($R^2=0.998$). Correlated noise: slope $-0.506$ vs. reference $-0.500$ ($R^2=0.997$). \\
Anchoring reduces raw ranking churn & Real-anchor sweeps on OULAD, MovieLens-25M, and Amazon Musical Instruments & Anchor weight $w=0.75$ cuts Flip@$K$ by 53--81\% across the three corpora and tested noise scales; rating-threshold sensitivity preserves the effect. \\
Pairwise noise condition & Injected-noise tails and descriptive 200-seed learner-update diagnostic & Normalized cross-boundary score differences stay below the sub-Gaussian envelope; shared seeds, pools, and scores make the repeated pair draws descriptive rather than independent observations. \\
Score-to-slate bridge & Ex post certificate-implementation checks on OULAD and EdNet & Anchor gaps and greedy margins are positive in all 640 logged pools. With candidates, state, bonuses, and novelty fixed, the realized $M_t$ check has zero instrumentation violations across 16{,}000 trials. \\
Target-value drift & Realized-window SCPO check and soft-window event logging & SCPO uses the exact realized-window envelope; for soft-window MMR, the fixed-window event holds on more than 98\% of logged round pairs. \\
Downstream stress test & Nominal OULAD update configurations and explicit EdNet score-noise injection & The interpretable noise-injection result gives clean/noisy Kendall-$\tau$ of 0.928--0.993 anchored vs. 0.748--0.904 unanchored; utility changes remain mixed. \\
\bottomrule
\end{tabular}
\end{table*}

\subsection{Selector components and drift}
\label{subsec:results-levers}

Component-removal runs show that the selector terms affect different outcomes. Removing target shaping changes accepted-item target values and the utility probe most; removing exploration mainly changes attempts per round; removing diversity changes consecutive-slate overlap; fixing slate size changes uptake. Against pure \textsc{ALS}, the full \textsc{Adaptive} mode on OULAD improves round-level correctness by 0.121 [0.105, 0.137], attempts per round by 0.714 [0.650, 0.777], and the utility probe by $14.824\times10^{-3}$ [13.172, 16.350], with absolute target-value jump 0.105. On EdNet, the same comparison gives smaller short-horizon gains: attempts per round +0.140 [0.115, 0.168] and utility-probe change $0.032\times10^{-3}$ [$-1.062$, 1.148]. Relative to default \textsc{Adaptive}, the point-estimate SCPO probe difference is $-1.339\times10^{-3}$ on OULAD but $+1.498\times10^{-3}$ on EdNet; hence the window-enforcing variant does not uniformly trade utility for its cleaner drift certificate.

\subsection{Target-value drift}
\label{subsec:results-drift}

The target-drift row of Table~\ref{tab:main-evidence-map} summarizes the certificate scope. Observed consecutive target-value jumps stay within the configured envelope. For \textsc{Adaptive-SCPO}, the envelope is exact on non-relaxation rounds and uses the realized relaxed window otherwise. For default soft-window MMR, the fixed-window event holds on more than 98\% of round pairs, so the same bound applies on those logged events. The deterministic statement belongs to the window-enforcing selector; soft-window MMR inherits it only when the event occurs.

\subsection{Raw score-ranking stability}
\label{subsec:results-score-ranking}

Figure~\ref{fig:topk-flip-theorem} gives the direct test of the top-$K$ stability result. In constructed pools with known gap $\gamma$, the eight-cell independent-Gaussian calibration fits slope $-0.220$ (SE 0.0042; 95\% $t$ CI $[-0.231,-0.210]$) with $R^2=0.998$ against the reference $-1/4$. The interval excludes $-0.25$, so the evidence supports the predicted linear form but indicates a slightly shallower finite-sample slope. The correlated-noise calibration, fit against $\gamma^2/((1-w)^2\hat\tau_{\max}^2)$, gives slope $-0.506$ with $R^2=0.997$ against the pairwise-form reference $-1/2$.

Real-anchor sweeps show the same direction outside the synthetic setting. Under independent noise on OULAD, increasing anchor weight from 0 to 0.75 reduces Flip@$K$ from 0.167 to 0.032 at $\sigma=0.02$, from 0.372 to 0.107 at $\sigma=0.05$, and from 0.603 to 0.216 at $\sigma=0.10$. The ratings-corpus checks use the larger grid $\sigma\in\{0.05,0.10,0.20\}$: on MovieLens-25M the reductions are 0.410 to 0.114, 0.653 to 0.229, and 0.875 to 0.410, while on Amazon Musical Instruments they are 0.266 to 0.066, 0.494 to 0.128, and 0.752 to 0.262. Figure~\ref{fig:real-anchor-heatmaps} shows the same monotone pattern visually across domains. Low-rank correlated noise follows the same pattern when summarized by the pairwise $\hat\tau^2$ proxy, and rating-threshold sensitivity on the two ratings corpora preserves the effect.

\begin{figure*}[t]
  \centering
  \begin{subfigure}[t]{0.48\textwidth}
    \centering
    \includegraphics[width=\linewidth]{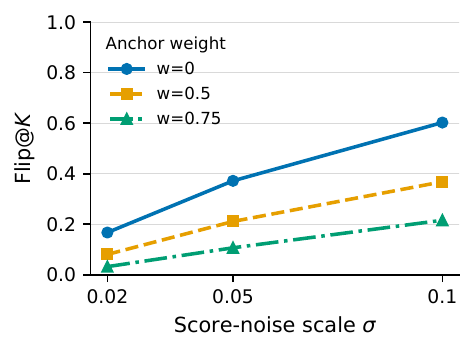}
    \caption{OULAD}
  \end{subfigure}\hfill
  \begin{subfigure}[t]{0.48\textwidth}
    \centering
    \includegraphics[width=\linewidth]{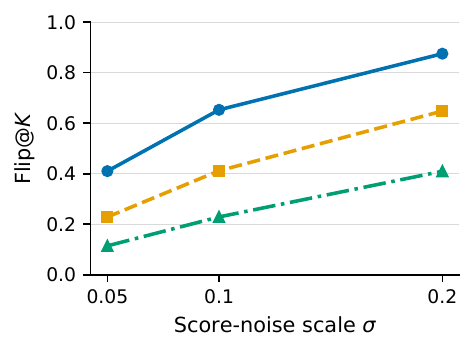}
    \caption{MovieLens-25M}
  \end{subfigure}

  \medskip
  \begin{subfigure}[t]{0.48\textwidth}
    \centering
    \includegraphics[width=\linewidth]{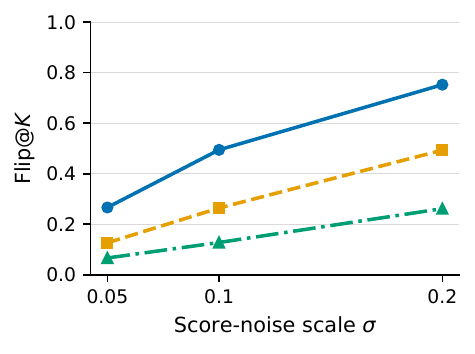}
    \caption{Amazon Musical Instruments}
  \end{subfigure}\hfill
  \begin{subfigure}[t]{0.48\textwidth}
    \centering
    \includegraphics[width=\linewidth]{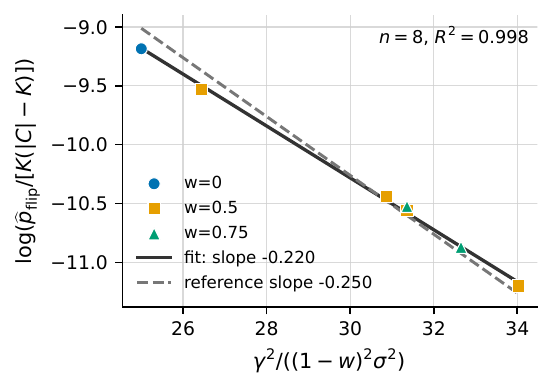}
    \caption{Fixed-margin exponent calibration}
    \label{fig:topk-flip-theorem}
  \end{subfigure}
  \caption{Direct score-stability diagnostics. Panels (a)--(c) show that raw top-$K$ churn increases with score-noise scale and decreases with anchor weight across one educational and two ratings corpora; the legend in (a) applies to (a)--(c). Panel (d) shows the eight-cell fixed-margin fit, with slope $-0.220$ (95\% CI $[-0.231,-0.210]$) versus reference $-1/4$.}
  \Description{Four panels in a two-by-two grid. OULAD, MovieLens-25M, and Amazon Musical Instruments plot the fraction of top-10 set changes against Gaussian score-noise scale; every curve rises with noise, and anchor weight 0.75 remains below 0.50 and the unanchored curve. The fourth panel plots eight fixed-margin cells against the theorem scaling variable; the fitted line is nearly linear but slightly shallower than the dashed negative-one-quarter reference.}
  \label{fig:real-anchor-heatmaps}
\end{figure*}

The pairwise-tail diagnostic confirms that the injected score-noise experiments satisfy the theorem condition: normalized cross-boundary differences have standard deviation near 1.0 and upper tails below $\exp(-u^2/2)$ for both independent and low-rank correlated noise. For example, at $u=2.0$ the measured tail is about 0.023 versus envelope 0.135. The selector-margin gate then checks the implementation of the final-slate bridge. On OULAD, $\gamma$ and $\Gamma_t$ are positive in all 320 logged candidate pools (median $\gamma=0.0577$, median $\Gamma_t=7.33\times10^{-4}$), and none of 3,877 ex post certified perturbation trials changes the greedy slate. On EdNet, the medians are smaller ($0.0071$ and $2.79\times10^{-4}$), but all 320 records again have positive margins and the certified-violation count is zero over 8,000 perturbation trials. Because $M_t$ is computed from each realized perturbation while candidates, state, bonuses, and novelty memory are held fixed, zero violations are a soundness check of the certificate instrumentation rather than independent evidence for the theorem. The full tail and margin plots appear in the supplement.

\subsection{Configured-update stress tests and closed-loop behavior}
\label{subsec:results-dp}

Across 200 learner-update seeds and 16 held-out pools, normalized score differences remain below the sub-Gaussian envelope. The 160{,}000 cross-boundary pair draws per configuration share seeds, pools, and item scores, so this is descriptive rather than inferential. An 80-update run raises $\hat\tau$ from about 0.67 to 1.17 and lowers anchor-weight-0.75 Jaccard from about 0.72 to 0.63. Because the released artifact does not establish the complete DP mechanism described in Section~\ref{subsec:method-design}, this diagnostic supports only the observed tail shape and is not evidence for a particular privacy tier or $\varepsilon$.

Closed-loop configuration comparisons are likewise treated as downstream stress tests, not verified privacy--utility curves. On OULAD, the $w=0.75$ anchored mode changes correctness by only about 0.005--0.007 across nominal tiers, compared with about 0.024--0.027 for unanchored \textsc{Adaptive}; pure \textsc{ALS} has zero change by construction because it is not updated, not because it has the best recommendation utility. On EdNet, changes are small and mixed. A separate explicit score-noise-injection run gives the interpretable stability result: clean/noisy Kendall-$\tau$ is 0.748--0.904 unanchored and 0.928--0.993 at $w=0.75$. We therefore claim score-noise attenuation, not universal downstream utility improvement.

Across four response models $\times$ four user profiles (16 settings), minimum paired deltas versus \textsc{ALS} are positive for correctness, post-test utility, and uptake on both datasets (supplementary robustness table). EdNet gains are smaller and more variable, and its configured-update utility results remain mixed. This sensitivity check affects only downstream outcomes; score- and margin-based diagnostics do not use simulated responses.

\section{Discussion}
\label{sec:discussion}

Deterministic selection separates certificates for privacy inheritance, replay, drift, and score-to-slate stability. End-to-end privacy requires public or independent inputs, prior DP outputs, or separately accounted inputs; fixing raw state yields only a conditional statement, and non-private anchors support stability rather than DP. The margin theorem is central; the other guarantees are scoped consequences.

Because OULAD and EdNet lack exposure logs, engagement is simulated. Impression logs retaining responses, candidate sets, and scores could enable real-decision $\Gamma_t/M_t$ checks. The released artifact does not establish the sampler and protected-unit clipping needed to validate its nominal privacy configurations, so those comparisons are stress tests rather than verified privacy--utility curves. A user-level guarantee additionally requires a contribution cap $B$. A failed sufficient margin certificate does not imply slate change; within these limits, the stability guarantees remain auditable.

\begin{acks}
Generative AI tools were used to assist with language editing and
manuscript revision. The authors verified all claims, analyses,
references, and final manuscript content.
\end{acks}

\onecolumn
\bibliographystyle{ACM-Reference-Format}
\bibliography{references}

\appendix

\section{Supplementary setting details}
\label{app:setting-details}

This section gives the concrete state and lever maps used by the experimental instantiation in the main paper. These rules are deterministic, public, and fixed before evaluating any policy.

\paragraph{Formal two-operator notation.}
The learning operator is a possibly randomized map $\mathcal{L}:D\mapsto(\theta_1,\ldots,\theta_T)$, with per-candidate scores $\mu_t(\cdot)=f(\theta_t,\cdot)$. The deterministic selector produces
\begin{equation}
  S_t=\mathcal{S}\!\left(C_t,\mu_t,s_t;\theta^{\mathrm{lev}}_t\right),
  \label{eq:selection}
\end{equation}
where $C_t$ is the candidate pool, $s_t$ is the pre-round state, and $\theta^{\mathrm{lev}}_t$ is the selector-lever vector. The full pipeline output is $\pi_{\mathcal{L},\mathcal{S}}(D)=(S_1,\ldots,S_T)$.

\paragraph{Formal governed selector.}
The bounded lever vector is
\[
\begin{aligned}
\theta^{\mathrm{lev}}&=(\Delta,\alpha,\lambda,K)\in\Theta_{\mathrm{gr}},\\
\Theta_{\mathrm{gr}}
&=[-\Delta_{\max},\Delta_{\max}]
\times [0,\alpha_{\max}]
\times [0,\lambda_{\max}]
\times \{K_{\min},\ldots,K_{\max}\}.
\end{aligned}
\]
The default soft-window greedy selector first computes
\[
  \tilde{\mu}_t(i)=\mu_t(i)+\eta_{\mathrm{tg}}\mathrm{prox}_w(d(i),m_t)+\alpha_t\sigma_t(i),
\]
where $\mathrm{prox}_w(d,m)=1-\min\{1,(d-m)^2/w^2\}$. Starting from $S=\emptyset$, it repeatedly adds
\[
  i^\star=\arg\max_{i\in C_t\setminus S}
  \left[(1-\lambda_t)\tilde{\mu}_t(i)
  -\lambda_t\max_{j\in S}\mathrm{sim}(i,j)
  +\nu\mathbb{I}\{i\notin H_t\}\right]
\]
until $|S|=K_t$, then applies deterministic item-id tie-breaking after fixed preprocessing and similarity computation.

\paragraph{Audit trace.}
The round trace is
\[
\begin{aligned}
\tau_t=\big(&C_t,\theta^{\mathrm{lev}}_t,m_t,s_t,H_t,
\{\mu_t(i),\tilde{\mu}_t(i),d(i),\sigma_t(i)\}_{i\in C_t},\\
&\{\mathrm{sim}(i,j)\}_{i,j\in C_t},S_t\big),
\end{aligned}
\]
together with the fixed tie-breaking rule. Public or immutable entries may be stored as identifiers plus deterministic lookup rules, but replay requires the full tuple above to be available directly or by lookup.

\paragraph{Pre-round state.}
At the beginning of round $t$, after candidate retrieval, the selector state is
\[
  s_t=(\widehat{k}_t,\widehat{e}_t,u_t^{(\mathrm{unc})},H_t).
\]
Let $a_r\in\{0,1\}$ be the attempt indicator and $y_r=a_r\tilde y_r\in\{0,1\}$ the observed correctness signal from round $r$. With neutral priors $\widehat{k}_1=\widehat{e}_1=0.5$, the response summaries for $t>1$ are
\[
\widehat{e}_t=\frac{1}{t-1}\sum_{r<t}a_r,\qquad
\widehat{k}_t=
\begin{cases}
\displaystyle\frac{\sum_{r<t}y_r}{\sum_{r<t}a_r}, & \sum_{r<t}a_r>0,\\[6pt]
0.5, & \sum_{r<t}a_r=0.
\end{cases}
\]
Uncertainty widths are normalized before use in the lever rule:
\[
u_t^{(\mathrm{unc})}=\frac{1}{|C_t|}\sum_{i\in C_t}\min\{1,\sigma_t(i)/\sigma_{\mathrm{ref}}\},
\]
where $\sigma_{\mathrm{ref}}>0$ is fixed from the training split before evaluation. The novelty memory $H_t$ is the ordered list of the last $L_H$ item identifiers displayed in $S_1,\ldots,S_{t-1}$, or all prior displayed identifiers if fewer than $L_H$ exist. Thus $s_t=\mathcal{U}(S_{<t},o_{<t},C_t)$ is a deterministic function of prior displayed slates, prior responses, and the current logged candidate pool; no hidden randomization is used in $\mathcal{U}$.

\paragraph{Lever map.}
For $x\in\mathbb{R}$, define $\mathrm{clip}(x,\ell,u)=\min\{u,\max\{\ell,x\}\}$ and
\[
\mathrm{clip}_{\mathbb{Z}}(x,K_{\min},K_{\max})
=\min\{K_{\max},\max\{K_{\min},\lfloor x+1/2\rfloor\}\}.
\]
Given $s_t$, the selector levers are deterministic clipped affine maps:
\[
\begin{aligned}
\alpha_t
&=\mathrm{clip}\!\left(\alpha_0+c_1(u_t^{(\mathrm{unc})}-0.5)-c_2(\widehat{e}_t-0.5),
  \underline{\alpha},\overline{\alpha}\right),\\
\lambda_t
&=\mathrm{clip}\!\left(\lambda_0+d_1(\widehat{e}_t-0.5),
  \underline{\lambda},\overline{\lambda}\right),\\
K_t
&=\mathrm{clip}_{\mathbb{Z}}\!\left(K_{\mathrm{base}}(0.8+0.4\widehat{e}_t),
  K_{\min},K_{\max}\right),\\
\Delta_t
&=\mathrm{clip}\!\left(\Delta_{\mathrm{base}}(0.8+0.6\widehat{e}_t),
  \underline{\Delta},\overline{\Delta}\right).
\end{aligned}
\]
The coefficients $\alpha_0,c_1,c_2,\lambda_0,d_1,K_{\mathrm{base}},\Delta_{\mathrm{base}}$ are nonnegative pre-specified constants and are not adapted during evaluation. The experiments use $\alpha_t\in[0.05,0.90]$, $\lambda_t\in[0.05,0.60]$, $K_t\in\{2,\ldots,8\}$, and $\Delta_t\in[0.06,0.22]$. The target value used by the shaping bonus is
\[
  m_t=\mathrm{clip}(\widehat{k}_t+\Delta_t,0,1).
\]

\paragraph{Replay inputs.}
For each round, the replay contract is the trace defined above. Candidate identities, target-value proxies $d(i)$, uncertainty widths $\sigma_t(i)$, pairwise similarities, novelty memory $H_t$, and deterministic tie-breaking order are either stored directly in the trace or reconstructed from immutable item metadata using logged identifiers. If any such input is computed from protected data, it must be a previous DP output or be separately DP-accounted for an end-to-end claim. Merely fixing it as a conditioned observable protects the learner output only conditional on that value; it does not protect the input itself.

\section{Window-enforcing selector (SCPO)}
\label{app:scpo}

We describe the \emph{constrained policy-optimised} (SCPO) variant of $\mathcal{S}$ referenced in Lemma~\ref{lem:drift}. SCPO replaces the default greedy MMR assembly step with a feasibility-constrained selector: at round $t$ it solves
\[
\begin{aligned}
S_t \in \arg\max_{S \subset C_t,\, |S| = K_t}\quad
& (1 - \lambda_t) \sum_{i \in S} \tilde\mu_t(i)
  - \lambda_t \max_{\substack{i,j \in S\\ i\neq j}}\mathrm{sim}(i,j)\\
\text{s.t.}\quad
& |d(i)-m_t|\le w\quad \forall i\in S,
\end{aligned}
\]
with a bounded relaxation: if no size-$K_t$ subset of $C_t$ satisfies the window, the window is relaxed to the smallest $w'_t \geq w$ under which at least $K_t$ candidates are feasible. Under SCPO the fixed-window event $\mathsf{Win}_t(w)$ in Lemma~\ref{lem:drift} holds by construction whenever the relaxation does not trigger. On relaxation rounds, Lemma~\ref{lem:drift} applies with the realized width $w'_t$, so the exact two-round envelope is $\eta+2\Delta_{\max}+w'_t+w'_{t+1}$ rather than the simpler fixed-window envelope.

\section{Supplementary proof details}
\label{app:proof-details}

This section records the formal statements and proof details shortened from the main text. They are included here so the main body can focus on intuition without dropping the formal argument.

\paragraph{Formal statements.}

\begin{lemma}[Post-processing privacy for decoupled selection]
\label{lem:postproc}
Let $\mathcal{L}:D\mapsto(\theta_1,\ldots,\theta_T)$ be $(\varepsilon,\delta)$-DP at record level. Let $\mathcal{S}\in\Pi_{\mathrm{gr}}$ be deterministic and receive at each round only the learner-derived score $\mu_t=f(\theta_t,\cdot)$, equivalently $\theta_t$ through the fixed scoring map $f$, and side information that is public or independent of $D$ or is deterministic post-processing of previous DP outputs. Then $\pi_{\mathcal{L},\mathcal{S}}(D)=(S_1,\ldots,S_T)$ is $(\varepsilon,\delta)$-DP. If $\mathcal{S}$ also consumes auxiliary DP outputs with budgets $(\varepsilon_j,\delta_j)$, the pipeline guarantee follows by standard composition. If other inputs are fixed by conditioning, the statement is conditional on those inputs and does not protect them.
\end{lemma}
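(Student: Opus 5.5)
The plan is to write the whole pipeline output as a single deterministic map applied to the learner's output, and then invoke the standard post-processing property of differential privacy. Concretely, I will define $G$ by $G(\theta_1,\ldots,\theta_T;Z)=(S_1,\ldots,S_T)$, computed recursively through $S_t=\mathcal{S}(C_t,f(\theta_t,\cdot),s_t;\theta^{\mathrm{lev}}_t)$. Here $Z$ collects the side information. The first step is to verify that every selector input at round $t$ is a deterministic function of $(\theta_{1:T},Z)$. The candidate pools and similarities lie in $Z$. The state is $s_t=\mathcal{U}(S_{<t},o_{<t},C_t)$, and the levers are the clipped affine maps of $s_t$ from Appendix~\ref{app:setting-details}. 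Induction on $t$ then shows that $S_t$ is a measurable deterministic function of $(\theta_{1:T},Z)$. The same argument covers the audit trace $\tau_t$.

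The second step handles $Z$ according to its privacy status in Definition~\ref{def:admissible}. For type (i), $Z$ is public or independent of $D$. For independent $Z$, I will condition on $Z=z$ and apply post-processing to the fixed map $G(\cdot;z)$. Independence means the law of $Z$ is the same under $D$ and $D'$, so integrating the pointwise inequality $\Pr[G(\mathcal{L}(D);z)\in E]\le e^\varepsilon\Pr[G(\mathcal{L}(D');z)\in E]+\delta$ over $z$ preserves $(\varepsilon,\delta)$. For type (ii), the input is a deterministic function of previous DP outputs. If these are components of $\mathcal{L}$'s own output, I absorb them into $G$. If they come from an earlier released mechanism, the joint release is already covered by that mechanism's accounting, and I again absorb them into $G$. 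For type (iii), I form the joint mechanism $(\mathcal{L},\mathcal{M}_1,\ldots)$. Basic (or adaptive) composition gives budget $(\varepsilon+\sum_j\varepsilon_j,\delta+\sum_j\delta_j)$, and post-processing is then applied to this joint output.

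The conditional clause follows from the same pointwise argument with $z$ held fixed. We get $(\varepsilon,\delta)$ for $G(\mathcal{L}(\cdot);z)$ for each $z$, but we do not integrate over $z$. Since $z$ may depend on $D$, this is why the input itself is not protected.

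The main obstacle I expect is the closed-loop dependence. The state $s_t$ and novelty memory $H_t$ depend on the responses $o_{<t}$, and those responses are data, not learner output. For the end-to-end claim, I plan to require that the responses be part of the public or independent side information, or be treated as type (iv). Otherwise $G$ is not a post-processing of $\mathcal{L}(D)$ alone. A second subtlety is adaptivity: if $\theta_t$ is trained on data chosen after $S_{<t}$, I will use the hypothesis that $\mathcal{L}$ outputs the whole sequence $(\theta_1,\ldots,\theta_T)$ under one $(\varepsilon,\delta)$ guarantee, with that accounting covering all adaptive updates. I will also note that the argument does not need measurability subtleties beyond $G$ being measurable, which holds because the selector uses finitely many argmax and clip operations with deterministic tie-breaking.
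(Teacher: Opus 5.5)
Your proposal is correct and takes the same route as the paper: fix the admissible side information $z$, write the pipeline as a deterministic map $h_z(\mathcal{L}(D))$, and apply post-processing through the preimage $h_z^{-1}(E)$. Your extra steps (integrating over independent random $Z$, composing type-(iii) inputs explicitly, and noting that $s_t$ and $H_t$ depend on raw responses) spell out what the paper leaves implicit; one small tightening is that conditioning on $Z=z$ requires $Z$ to be independent of $\mathcal{L}(D)$ itself, including the learner's internal noise, and not merely to have the same law under $D$ and $D'$.
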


\begin{corollary}[Audit trace privacy]
\label{cor:audit}
Publishing the replay trace $(\tau_t)_{t=1}^T$ defined in the supplementary setting details spends no additional privacy budget whenever every non-learner component is public or independent, derived from a previous DP output, or separately DP-accounted, and every learner-derived component is a deterministic function of $\theta_t$. Conditioned raw inputs and traces exposing a non-private anchor trained on $D$ are not covered by an end-to-end claim.
\end{corollary}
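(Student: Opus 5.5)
The plan is to reduce the corollary to Lemma~\ref{lem:postproc} by exhibiting the whole published trace $(\tau_t)_{t=1}^T$ as a deterministic function of two objects: the learner output $(\theta_1,\ldots,\theta_T)$, and a side-information vector $Z$ whose components are each admissible of type (i)--(iii) in Definition~\ref{def:admissible}.

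First, partition the entries of $\tau_t$. The learner-derived entries are $\mu_t(i)$, and also $\sigma_t(i)$ whenever the uncertainty widths come from $\mathcal{L}$. By hypothesis each is a deterministic function of $\theta_t$ through the fixed maps $f$ and the width map. The shaped scores $\tilde\mu_t(i)$ combine these with $d(i)$, $m_t$ and $\alpha_t$. The remaining entries are $C_t$, $d(i)$, $\mathrm{sim}$, $H_t$, $s_t$, $m_t$ and $\theta^{\mathrm{lev}}_t$; these are the non-learner components.

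Second, I would show by induction on $t$ that every one of these entries is a deterministic map of $(\theta_{\le t}, Z)$. The lever map and the target rule $m_t=\mathrm{clip}(\widehat k_t+\Delta_t,0,1)$ are fixed clipped affine functions of $s_t$, so they preserve this property. The slate $S_t$ is $\mathcal{S}$ applied to these entries with fixed tie-breaking, so it is deterministic as well. This gives $(\tau_t)_t=G\big((\theta_t)_t,Z\big)$ for a fixed measurable $G$.

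Third, split into cases according to the status of $Z$. If $Z$ is public or independent of $D$, then for every fixed $z$ the map $G(\cdot,z)$ is post-processing. Applying the DP inequality of $\mathcal{L}$ conditionally on $Z=z$ and then integrating over the law of $Z$, which does not depend on $D$, yields $(\varepsilon,\delta)$-DP. If components of $Z$ are previous DP outputs, they are already part of what is released, so no extra budget is spent. If components are separately accounted DP outputs $(\varepsilon_j,\delta_j)$, apply adaptive composition, as in Lemma~\ref{lem:postproc}, and then post-processing.

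Finally, the exclusion clause follows by noting where the argument breaks. If some entry, such as raw $s_t$ or a non-private ALS anchor trained on $D$, depends on $D$ outside the DP mechanisms, then $G$ is no longer a $D$-free map. In that case only the conditional statement for fixed $z$ survives, and the entry itself is published unprotected.

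The main obstacle is the adaptive, closed-loop dependence. The state $s_t$ and the novelty memory $H_t$ depend on prior slates, which is fine, but also on user responses $o_{<t}$. The induction therefore closes only if those responses are themselves admissible, meaning public or accounted. Otherwise they fall into type (iv). I would state this explicitly and verify that the independence case is handled by conditioning on the $D$-independent randomness rather than on $D$-dependent quantities.
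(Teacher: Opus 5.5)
Your proposal is correct and takes the same route as the paper. The paper's proof simply reapplies the post-processing argument of Lemma~\ref{lem:postproc} to the extended deterministic output $\tau_t$: it fixes admissible side information $z$, writes the released object as $h_z(\mathcal{L}(D))$, pulls $h_z^{-1}(E)$ back to an event over learner outputs, and reads off the exclusion clause from the cases where raw state or a non-private anchor makes the map depend on $D$. Your round-by-round induction, your integration over the law of $Z$, and your observation that raw responses $o_{<t}$ place $s_t$ in type (iv) are extra detail the paper leaves implicit, and they are consistent with its scoping.
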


\begin{lemma}[User-level lift]
\label{lem:userdp}
Under Lemma~\ref{lem:postproc}'s hypotheses, if each user contributes at most $B$ records and $\mathcal{L}$ is $(\varepsilon,\delta)$-DP at record level, then the pipeline is
\[
  \left(B\varepsilon,\;\frac{e^{B\varepsilon}-1}{e^\varepsilon-1}\delta\right)\text{-DP}
\]
at user level. In particular, the second component is at most $B e^{(B-1)\varepsilon}\delta$.
\end{lemma}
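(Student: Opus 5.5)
The plan is to apply the standard group-privacy argument to the full pipeline $\pi_{\mathcal{L},\mathcal{S}}$, using Lemma~\ref{lem:postproc} as the starting point. Under its hypotheses, the pipeline output $(S_1,\ldots,S_T)$ is a deterministic post-processing of the learner output together with admissible side information. It is therefore $(\varepsilon,\delta)$-DP at record level. From here on we treat the pipeline as one randomized map $\pi$ that is record-level $(\varepsilon,\delta)$-DP. The selector then plays no further role.

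Next, take user-neighboring datasets $D,D'$. They differ in all exposure rounds of a single user, and by the contribution cap there are at most $B$ such rounds. Build a chain $D=D_0,D_1,\ldots,D_B=D'$ in which consecutive datasets are record-neighbors. Each step replaces, adds, or removes one of that user's protected round records; if the user has fewer than $B$ records, pad the chain with repeated datasets. Fix an event $E$ and let $p_j=\Pr[\pi(D_j)\in E]$. Record-level DP gives $p_j\le e^{\varepsilon}p_{j+1}+\delta$. Unroll this by induction on the chain length $k$, with claim
\[
p_0\le e^{k\varepsilon}p_k+\delta\sum_{j=0}^{k-1}e^{j\varepsilon}.
\]
The inductive step substitutes the bound for $p_k$ into $e^{k\varepsilon}p_k$. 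This multiplies the new $\delta$ by $e^{k\varepsilon}$, which extends the geometric sum by one term. At $k=B$ this yields $\big(B\varepsilon,\delta\sum_{j=0}^{B-1}e^{j\varepsilon}\big)$, matching the display in Section~\ref{subsec:admissible}. The geometric sum equals $\frac{e^{B\varepsilon}-1}{e^{\varepsilon}-1}\delta$ for $\varepsilon>0$, and we read it as $B\delta$ when $\varepsilon=0$. Each of its $B$ terms is at most $e^{(B-1)\varepsilon}$, which gives the stated simplification $Be^{(B-1)\varepsilon}\delta$.

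The only delicate step is the chain construction. It has to respect the paper's protected-unit convention: a slate expanded into item rows still counts as one round record, so each step of the chain changes exactly one protected record. The construction also relies on the cap $B$ holding for every user in both datasets, which makes the chain length at most $B$. Both points are definitional rather than technical. I would state them explicitly and note that without an enforced cap the lemma gives no bound, consistent with the remark in Section~\ref{subsec:admissible}.
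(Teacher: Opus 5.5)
Your proof is correct and follows the paper's argument: a chain of at most $B$ record-neighboring datasets, the unrolled recursion giving $\delta\sum_{j=0}^{B-1}e^{j\varepsilon}$, and the termwise bound $e^{j\varepsilon}\le e^{(B-1)\varepsilon}$. The only difference is the order of the two standard steps: the paper applies group privacy to $\mathcal{L}$ and then uses post-processing to carry the user-level guarantee to the pipeline, whereas you invoke Lemma~\ref{lem:postproc} at record level first and then run group privacy on the pipeline, which uses that lemma exactly as stated and is equally valid.
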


\begin{lemma}[Selector-side target-value drift]
\label{lem:drift}
Let $A_t$ be the accepted items in round $t$ and $\bar d(A_t)=|A_t|^{-1}\sum_{i\in A_t}d(i)$. Suppose the selector enforces $|d(i)-m_t|\le w_t$ for every accepted item and $|m_{t+1}-m_t|\le \eta+2\Delta_{\max}$. Then
\[
  |\bar d(A_{t+1})-\bar d(A_t)| \le \eta+2\Delta_{\max}+w_t+w_{t+1}.
\]
For a fixed window $w$, this becomes $\eta+2\Delta_{\max}+2w$.
\end{lemma}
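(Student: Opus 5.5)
The bound is a two-round triangle inequality. The accepted-item averages are tied to their targets by the window constraint, and the targets are tied to each other by the assumed bound on $|m_{t+1}-m_t|$. I would therefore pass from $\bar d(A_t)$ to $m_t$, then to $m_{t+1}$, then to $\bar d(A_{t+1})$, and bound each of the three legs separately.

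\emph{Step 1: each average stays near its target.} Assume $A_t$ is nonempty; an empty accepted set has no average and the statement is vacuous for it. The window constraint gives $|d(i)-m_t|\le w_t$ for every $i\in A_t$. Since
\[
\bar d(A_t)-m_t=|A_t|^{-1}\sum_{i\in A_t}\bigl(d(i)-m_t\bigr),
\]
Jensen's inequality, or simply convexity of $|\cdot|$, yields $|\bar d(A_t)-m_t|\le w_t$. The same argument at round $t+1$ gives $|\bar d(A_{t+1})-m_{t+1}|\le w_{t+1}$.

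\emph{Step 2: combine.} Write
\[
\bar d(A_{t+1})-\bar d(A_t)=\bigl(\bar d(A_{t+1})-m_{t+1}\bigr)+\bigl(m_{t+1}-m_t\bigr)+\bigl(m_t-\bar d(A_t)\bigr).
\]
Applying the triangle inequality, Step 1 bounds the first and third terms by $w_{t+1}$ and $w_t$. The hypothesis $|m_{t+1}-m_t|\le\eta+2\Delta_{\max}$ bounds the middle term. Summing gives the stated envelope, and setting $w_t=w_{t+1}=w$ gives the fixed-window form $\eta+2\Delta_{\max}+2w$.

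\emph{Where the real work is.} Once the target-movement bound is taken as a hypothesis, there is no genuine obstacle. The substantive content lies in justifying that hypothesis for the instantiated selector, which the lemma deliberately leaves outside its statement. One would argue as follows. The target is $m_t=\mathrm{clip}(\widehat k_t+\Delta_t,0,1)$, and clipping to $[0,1]$ is $1$-Lipschitz. So $|m_{t+1}-m_t|\le|\widehat k_{t+1}-\widehat k_t|+|\Delta_{t+1}-\Delta_t|$. The lever difference is at most $2\Delta_{\max}$ because both levers lie in $[-\Delta_{\max},\Delta_{\max}]$. The state-estimate movement would be bounded by a constant $\eta$; a running-mean update over attempted rounds provides one, since each new round changes the mean by at most one over the attempt count.

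For SCPO, the window constraint on accepted items holds by construction on non-relaxation rounds. On relaxation rounds one substitutes the realized width $w'_t$ for $w_t$. For soft-window MMR, the constraint holds only on the logged event $\mathsf{Win}_t(w)$, and the bound is claimed only on those rounds.
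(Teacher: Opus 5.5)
Your proof is correct and is essentially the paper's argument: the window constraint puts each average $\bar d(A_t)$ in $[m_t-w_t,m_t+w_t]$ (and likewise at round $t+1$), and the triangle inequality through $m_t$ and $m_{t+1}$, together with the assumed target-movement bound, gives the envelope. Your remarks on justifying the $|m_{t+1}-m_t|$ hypothesis, and on SCPO relaxation rounds and soft-window events, go beyond what the lemma needs but agree with how the paper scopes it.
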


\begin{theorem}[Margin-certified selector stability]
\label{thm:selector-stability}
Consider a deterministic selector that builds an ordered slate through $K$ greedy choices. At step $q$, with prefix $S_{q-1}$, write the selector objective as $F_q^\mu(i\mid S_{q-1})$. Let $\mu^0$ be a reference score vector and let $i_q^0$ be the item selected under $\mu^0$. Define the stepwise reference margin
\[
  \Gamma_q=\min_{j\in C\setminus S_{q-1}^0,\;j\ne i_q^0}
  \left[
    F_q^{\mu^0}(i_q^0\mid S_{q-1}^0)-F_q^{\mu^0}(j\mid S_{q-1}^0)
  \right],
\]
and let $\Gamma=\min_q\Gamma_q$. If a perturbation envelope $M$ satisfies
\[
  |F_q^\mu(i\mid S_{q-1}^0)-F_q^{\mu^0}(i\mid S_{q-1}^0)|\le M
\]
for every step $q$ and remaining item $i$ along the reference trajectory, and if $\Gamma>0$ and $M<\Gamma/2$, then the ordered slate produced under $\mu$ is identical to the ordered slate produced under $\mu^0$.
\end{theorem}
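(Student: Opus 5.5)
The plan is induction on the step index $q$, with the hypothesis that the perturbed run has the same prefix as the reference run: $S_{q-1}=S_{q-1}^0$. Write $i_q$ for the item the selector picks under $\mu$ at step $q$. The base case $q=1$ holds because both runs start from $S_0=S_0^0=\emptyset$ with the same candidate pool $C$. This matters because the envelope $M$ in Theorem~\ref{thm:selector-stability} is only assumed along the reference trajectory, that is, at prefixes $S_{q-1}^0$. The induction is what guarantees that the perturbed run actually evaluates its objective at those prefixes, so the envelope applies.

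For the inductive step, fix $q$ and assume $S_{q-1}=S_{q-1}^0$. Then the remaining candidate sets $C\setminus S_{q-1}$ and $C\setminus S_{q-1}^0$ coincide. For any remaining $j\ne i_q^0$, apply the envelope once to $i_q^0$ and once to $j$:
\[
F_q^{\mu}(i_q^0\mid S_{q-1}^0)-F_q^{\mu}(j\mid S_{q-1}^0)\ \ge\ F_q^{\mu^0}(i_q^0\mid S_{q-1}^0)-F_q^{\mu^0}(j\mid S_{q-1}^0)-2M\ \ge\ \Gamma_q-2M\ \ge\ \Gamma-2M\ >\ 0 .
\]
So $i_q^0$ strictly beats every other remaining item under $\mu$, and the perturbed argmax is unique and equals $i_q^0$. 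The deterministic item-id tie-breaking rule is never invoked, because no tie is possible under $\mu$. Since $\Gamma>0$, there is also no tie under $\mu^0$, so $i_q^0$ is well defined and equal to the reference choice. Hence $i_q=i_q^0$, and $S_q=S_{q-1}\cup\{i_q\}=S_q^0$, with the order preserved because items are appended in step order.

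After $K$ steps the two ordered slates coincide, which proves the theorem. The slate length must be the same under both runs, so $K$ is treated as fixed. If $K_t$ is itself a lever computed from state, I will note that state, and hence $K_t$, is held fixed within the round, exactly as in the statement.

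The main obstacle is a subtlety rather than a calculation: making sure the envelope hypothesis is used only at reference prefixes. A naive argument might try to bound $F_q^\mu$ at perturbed prefixes, where the hypothesis says nothing. The induction resolves this by establishing the equality of prefixes before the envelope is invoked at the next step. A second point is strictness. The hypothesis $M<\Gamma/2$ is strict, so the perturbed gap is strictly positive, which removes any dependence on how ties are broken.
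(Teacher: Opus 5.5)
Your proposal is correct and is the same argument as the paper's: induction on the step index with equal prefixes, applying the envelope at $i_q^0$ and at each competitor $j$ to get a perturbed gap of at least $\Gamma-2M>0$, so the same unique maximizer is chosen at every step. Your added remarks on the two points the paper leaves implicit are sound: the envelope is invoked only at reference prefixes, and strictness makes the result independent of tie-breaking.
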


\begin{corollary}[Raw top-$K$ score-ranking stability]
\label{cor:topk-stability}
Let $\mu^0:C\to\mathbb{R}$ be a fixed anchor score vector with anchor gap
\[
\gamma=
\min_{\substack{i\in\mathrm{TopK}(\mu^0)\\j\notin\mathrm{TopK}(\mu^0)}}
\bigl[\mu^0(i)-\mu^0(j)\bigr]>0.
\]
For the blended score $\mu^w(i)=w\mu^0(i)+(1-w)(\mu^0(i)+\xi_i)$ with $w\in[0,1)$, assume that for every originally top-$K$ item $i$ and outside item $j$,
\[
  \Pr[\xi_j-\xi_i\ge u]\le \exp(-u^2/(2\tau^2)).
\]
Then
\[
  \Pr[\mathrm{TopK}(\mu^w)\ne \mathrm{TopK}(\mu^0)]
  \le
  K(|C|-K)\exp\!\left(-\frac{\gamma^2}{2(1-w)^2\tau^2}\right).
\]
If the item errors are independent zero-mean sub-Gaussian with coordinate proxy $\sigma^2$, then $\tau^2=2\sigma^2$ and the exponent becomes $-\gamma^2/[4(1-w)^2\sigma^2]$. At $w=1$, the adaptive score is ignored and the flip probability is zero.
\end{corollary}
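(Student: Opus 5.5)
The plan is to reduce Corollary~\ref{cor:topk-stability} to a union bound over cross-boundary pairs, with Theorem~\ref{thm:selector-stability} supplying the deterministic event on which no flip occurs. First rewrite the blended score as $\mu^w(i)=\mu^0(i)+(1-w)\xi_i$, so the perturbation of every coordinate is exactly $(1-w)\xi_i$. Note that $\mathrm{TopK}(\mu^w)\ne\mathrm{TopK}(\mu^0)$ (as sets, with deterministic tie-breaking) can only happen if some outside item $j$ weakly overtakes some originally top-$K$ item $i$. Indeed, if $\mu^w(i)>\mu^w(j)$ for every $i\in\mathrm{TopK}(\mu^0)$ and $j\notin\mathrm{TopK}(\mu^0)$, then the $K$ original items are strictly above all others, so the top-$K$ set is unchanged. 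Since $\mu^w(j)-\mu^w(i)=-(\mu^0(i)-\mu^0(j))+(1-w)(\xi_j-\xi_i)$ and $\mu^0(i)-\mu^0(j)\ge\gamma$, the flip event is contained in
\[
\bigcup_{i\in\mathrm{TopK}(\mu^0),\,j\notin\mathrm{TopK}(\mu^0)}\bigl\{(1-w)(\xi_j-\xi_i)\ge\gamma\bigr\}.
\]
One could phrase this via Theorem~\ref{thm:selector-stability} with $F_q^\mu=\mu$, but the pairwise form is sharper than the $\|\cdot\|_\infty$ envelope: it avoids the factor $2$ loss from $M<\Gamma/2$, and that loss would change the exponent. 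I will therefore argue pairwise directly and remark on the relation.

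For $w<1$, divide by $1-w>0$. Each event becomes $\{\xi_j-\xi_i\ge\gamma/(1-w)\}$, whose probability is at most $\exp(-\gamma^2/(2(1-w)^2\tau^2))$ by the pairwise tail hypothesis with $u=\gamma/(1-w)>0$. There are $K(|C|-K)$ such pairs, so a union bound gives the stated inequality. No independence across pairs is needed, which is why only a pairwise condition is assumed.

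For the independent case, if $\xi_i,\xi_j$ are independent zero-mean sub-Gaussian with proxy $\sigma^2$, then $\xi_j-\xi_i$ is sub-Gaussian with proxy $2\sigma^2$, since the moment generating functions multiply and $-\xi_i$ has the same proxy. The Chernoff bound then gives $\Pr[\xi_j-\xi_i\ge u]\le\exp(-u^2/(4\sigma^2))$, i.e. $\tau^2=2\sigma^2$, and substituting yields the exponent $-\gamma^2/[4(1-w)^2\sigma^2]$. At $w=1$, $\mu^w=\mu^0$ identically, so the probability is zero. This is consistent with reading the bound's exponent as $-\infty$.

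The only delicate point is the set-inclusion step with ties: equality $\mu^w(j)=\mu^w(i)$ could flip under item-id tie-breaking. That is why I use the non-strict event $\ge\gamma$ in the union, which the tail bound covers. Everything else is routine.
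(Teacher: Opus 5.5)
Your proof is correct and follows the paper's argument: you rewrite $\mu^w=\mu^0+(1-w)\xi$ and observe that a set change requires some outside item $j$ to reach an original top-$K$ item $i$, which forces $\xi_j-\xi_i\ge\gamma/(1-w)$; you then union-bound the pairwise tail over the $K(|C|-K)$ cross-boundary pairs. Your non-strict crossing event, which covers equality under item-id tie-breaking, and your explicit MGF derivation of $\tau^2=2\sigma^2$ fill in details that the paper's sketch leaves implicit.
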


\begin{corollary}[Biased anchor]
\label{cor:approx-anchor}
If the anchor satisfies $\|\mu^0-\mu^\star\|_\infty\le\beta$, the same bound holds with the effective gap $\gamma_\beta=(\gamma-2\beta)_+$ in place of $\gamma$.
\end{corollary}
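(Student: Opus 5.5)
The plan is to reduce the biased-anchor case to the proof of Corollary~\ref{cor:topk-stability}: treat the anchor bias as a deterministic perturbation of at most $\beta$ per coordinate. A deterministic perturbation of that size can consume at most $2\beta$ of any pairwise gap. The remaining gap $\gamma-2\beta$ is then what the stochastic term $(1-w)\xi$ must overcome. I read the statement as follows: the event of interest is that $\mathrm{TopK}(\mu^w)$ differs from the top-$K$ set of a reference vector $\nu$ with gap $\gamma$. The blended score then decomposes as $\mu^w=\nu+b+(1-w)\xi$ with $\|b\|_\infty\le\beta$. The two natural readings both fit this decomposition.
\begin{enumerate*}[label=(\alph*)]
\item Take $\nu=\mu^\star$, the target score vector, and $\mu^w=w\mu^0+(1-w)(\mu^\star+\xi)$. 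Then $b=w(\mu^0-\mu^\star)$, so $\|b\|_\infty\le w\beta\le\beta$.
\item Take $\nu=\mu^0$ and compare to $\mathrm{TopK}(\mu^\star)$. The triangle inequality gives that $\mu^\star$ has gap at least $\gamma-2\beta$ on the same partition whenever this is positive.
\end{enumerate*}
I will write the argument for reading (a) and note that reading (b) is the same computation with the roles of the two vectors swapped.

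The first step is to redo the event reduction from the raw top-$K$ corollary. Let $T=\mathrm{TopK}(\nu)$. A flip $\mathrm{TopK}(\mu^w)\ne T$ requires some $i\in T$ and $j\notin T$ with $\mu^w(j)\ge\mu^w(i)$. Ties go against us here, which only enlarges the event. Expanding this inequality gives
\[
(1-w)(\xi_j-\xi_i)\ \ge\ \nu(i)-\nu(j)-\bigl(b(j)-b(i)\bigr)\ \ge\ \gamma-2\beta .
\]
Here I use $\nu(i)-\nu(j)\ge\gamma$ and $|b(j)-b(i)|\le 2\|b\|_\infty\le2\beta$. In reading (a) the bound is in fact the sharper $\gamma-2w\beta$.

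The second step is the tail bound and union bound. If $\gamma_\beta=(\gamma-2\beta)_+>0$ and $w<1$, the displayed event implies $\xi_j-\xi_i\ge \gamma_\beta/(1-w)$. The pairwise tail hypothesis then bounds its probability by $\exp\bigl(-\gamma_\beta^2/(2(1-w)^2\tau^2)\bigr)$. A union bound over the $K(|C|-K)$ cross-boundary pairs gives the stated bound with $\gamma$ replaced by $\gamma_\beta$. The independent sub-Gaussian specialization $\tau^2=2\sigma^2$ carries over verbatim.

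Two edge cases remain.
\begin{itemize}
\item If $\gamma\le2\beta$, then $\gamma_\beta=0$ and the right-hand side is $K(|C|-K)\ge1$, so the bound holds trivially.
\item At $w=1$ in reading (a), the noise term vanishes and $\mu^w=\mu^0$. No flip occurs when $\gamma-2\beta>0$, which matches the limit of the bound.
\end{itemize}

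The only step needing care is the bookkeeping of which vector defines $T$ and the gap. The bias must be absorbed into the pairwise difference before the tail bound is applied, because the tail hypothesis is stated for $\xi$ alone, not for $b+\xi$. Once the bias is isolated as a deterministic shift of size at most $2\beta$ on each pairwise difference, the rest is exactly the proof of Corollary~\ref{cor:topk-stability}.
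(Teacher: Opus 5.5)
Your argument is correct and matches the paper's proof. The paper's one-line proof notes that a $\beta$-accurate anchor shrinks every cross-boundary gap by at most $2\beta$, then reuses the pairwise tail and union bound from Corollary~\ref{cor:topk-stability}; you make the same bookkeeping explicit by isolating the bias as a deterministic shift before applying the tail hypothesis to $\xi$. Since the paper keeps $\gamma$ as the gap of the anchor $\mu^0$, your reading~(b) is the one that matches its notation: there the triangle inequality gives $\mu^\star$ a gap of at least $\gamma-2\beta$ on the anchor's partition.
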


\begin{proposition}[Greedy-selector stability under stepwise margin]
\label{prop:selector-margin}
\label{cor:selector-margin}
For a fixed round, let $\Gamma_t$ be the minimum stepwise margin of the greedy selector under $\mu^0$, and define
\[
M_t(\mu,\mu^0)
=(1-\lambda_t)\bigl(\|\mu-\mu^0\|_\infty+\|b_t^\mu-b_t^{\mu^0}\|_\infty\bigr)
+\|n_t^\mu-n_t^{\mu^0}\|_\infty .
\]
If $\Gamma_t>0$ and $M_t(\mu,\mu^0)<\Gamma_t/2$, then the greedy ordered slate under $\mu$ is identical to the greedy ordered slate under $\mu^0$.
\end{proposition}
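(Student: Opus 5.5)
The plan is to reduce Proposition~\ref{prop:selector-margin} to Theorem~\ref{thm:selector-stability}. That theorem already gives invariance of the ordered slate once we have a uniform envelope on objective movement along the reference trajectory. So the only work is to show that $M_t(\mu,\mu^0)$ is such an envelope for the greedy MMR objective. We read $b_t^\mu$ as the vector of shaping and exploration bonuses entering $\tilde\mu_t=\mu+b_t^\mu$, and $n_t^\mu$ as the additive novelty/state term. With this reading the step-$q$ objective is
\[
F_q^\mu(i\mid S)=(1-\lambda_t)\bigl(\mu(i)+b_t^\mu(i)\bigr)-\lambda_t\max_{j\in S}\mathrm{sim}(i,j)+n_t^\mu(i).
\]

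\textbf{Step 1: difference the objectives at a fixed prefix.} Fix a step $q$ and the reference prefix $S_{q-1}^0$, and take $i\in C_t\setminus S_{q-1}^0$. In the round-level setting the similarities and $\lambda_t$ do not depend on $\mu$, so the diversity penalty $-\lambda_t\max_{j\in S_{q-1}^0}\mathrm{sim}(i,j)$ cancels exactly. What remains is
\[
F_q^\mu(i\mid S_{q-1}^0)-F_q^{\mu^0}(i\mid S_{q-1}^0)=(1-\lambda_t)\bigl[(\mu-\mu^0)(i)+(b_t^\mu-b_t^{\mu^0})(i)\bigr]+(n_t^\mu-n_t^{\mu^0})(i).
\]

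\textbf{Step 2: bound each term.} The triangle inequality and $\lambda_t\in[0,1)$ bound each coordinate by its sup-norm. This gives $|F_q^\mu-F_q^{\mu^0}|\le M_t(\mu,\mu^0)$ for every $q$ and every remaining $i$, which is exactly the envelope hypothesis of Theorem~\ref{thm:selector-stability} with $M=M_t$.

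\textbf{Step 3: invoke Theorem~\ref{thm:selector-stability}.} Its margin $\Gamma$, computed for this objective, is $\Gamma_t$, and we are given $\Gamma_t>0$ and $M_t<\Gamma_t/2$. The theorem then yields identical ordered slates.

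\textbf{Step 4: handle $K_t$ and tie-breaking.} If $K_t$ depends on $\mu$, the two slates could differ in length even with identical greedy choices. In the round-level setting $K_t$ is a function of the state $s_t$, which is fixed, so $K_t$ is the same under $\mu$ and $\mu^0$. Tie-breaking never activates: $\Gamma_t>0$ together with strict inequality $\Gamma_t-2M_t>0$ keeps every maximizer unique under both $\mu$ and $\mu^0$.

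\textbf{Main obstacle.} The delicate point is justifying the cancellation in Step 1. It requires that the prefix-dependent penalty and the novelty indicator be evaluated at the reference prefix $S_{q-1}^0$, not at a perturbed prefix. This is handled by the induction inside Theorem~\ref{thm:selector-stability}: the prefixes agree up to step $q$, so evaluating at $S_{q-1}^0$ is legitimate. For completeness we would re-state that induction explicitly. Assume the prefixes agree through step $q-1$. Then for every $j\ne i_q^0$,
\[
F_q^\mu(i_q^0)-F_q^\mu(j)\ge\Gamma_q-2M_t>0,
\]
so the greedy choices also agree at step $q$.
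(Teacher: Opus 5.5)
Your proposal is correct and matches the paper's proof: the paper likewise bounds the per-candidate movement of the shaped greedy objective by $M_t(\mu,\mu^0)$ and then invokes Theorem~\ref{thm:selector-stability} with $M=M_t$ and $\Gamma=\Gamma_t$. Your explicit remarks that $K_t$ is fixed by the state and that ties cannot arise are details the paper leaves implicit. One minor nit: the novelty term depends on $H_t$, not on the within-round prefix, so only the diversity penalty actually needs the common-prefix induction.
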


\begin{corollary}[Sub-Gaussian greedy-slate stability]
\label{cor:greedy-stability}
In the fixed-bonus case $b_t^\mu=b_t^{\mu^0}$ and $n_t^\mu=n_t^{\mu^0}$, if each coordinate error is two-sided sub-Gaussian with proxy $\sigma^2$ and $\lambda_t<1$, then
\[
  \Pr[S_t(\mu)\ne S_t(\mu^0)]
  \le
  2|C_t|\exp\!\left(
    -\frac{\Gamma_t^2}{8(1-\lambda_t)^2\sigma^2}
  \right).
\]
\end{corollary}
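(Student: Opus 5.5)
The plan is to reduce Corollary~\ref{cor:greedy-stability} to Proposition~\ref{prop:selector-margin} and then apply a union bound with a sub-Gaussian tail bound. In the fixed-bonus case $b_t^\mu=b_t^{\mu^0}$ and $n_t^\mu=n_t^{\mu^0}$, so the envelope in Proposition~\ref{prop:selector-margin} collapses to
\[
M_t(\mu,\mu^0)=(1-\lambda_t)\|\xi\|_\infty,\qquad \xi=\mu-\mu^0 .
\]
Assume $\Gamma_t>0$; otherwise the right-hand side is at least $2|C_t|\ge 1$ and the bound is trivial. By the contrapositive of the proposition, a slate change $S_t(\mu)\ne S_t(\mu^0)$ forces
\[
(1-\lambda_t)\|\xi\|_\infty\ge \Gamma_t/2 .
\]
Since $\lambda_t<1$, this event is $\|\xi\|_\infty\ge u$ with $u=\Gamma_t/(2(1-\lambda_t))$. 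It gives
\[
\Pr[S_t(\mu)\ne S_t(\mu^0)]\le \Pr[\|\xi\|_\infty\ge u].
\]

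The second step is the union bound over the candidate pool:
\[
\Pr[\|\xi\|_\infty\ge u]\le \sum_{i\in C_t}\Pr[|\xi_i|\ge u].
\]
Each term is bounded by the two-sided sub-Gaussian tail with proxy $\sigma^2$. I will read ``two-sided'' so that each side separately satisfies $\Pr[\pm\xi_i\ge u]\le \exp(-u^2/(2\sigma^2))$. Summing the two sides gives $\Pr[|\xi_i|\ge u]\le 2\exp(-u^2/(2\sigma^2))$, so no independence across coordinates is needed. Substituting $u$ gives
\[
\frac{u^2}{2\sigma^2}=\frac{\Gamma_t^2}{8(1-\lambda_t)^2\sigma^2},
\]
and multiplying by $|C_t|$ yields exactly the stated bound $2|C_t|\exp\!\left(-\Gamma_t^2/(8(1-\lambda_t)^2\sigma^2)\right)$.

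The main obstacle is justifying that the envelope really reduces to $(1-\lambda_t)\|\xi\|_\infty$ along the whole greedy trajectory. At each step $q$ and for each remaining item $i$, the greedy objective is
\[
(1-\lambda_t)\tilde\mu_t(i)-\lambda_t\max_{j\in S}\mathrm{sim}(i,j)+\nu\,\mathbb{I}\{i\notin H_t\}.
\]
Under the fixed-bonus hypothesis only the $\tilde\mu_t(i)$ term depends on $\mu$, and it moves by exactly $\xi_i$. The similarity and novelty terms are evaluated at the reference prefix $S^0_{q-1}$, as Theorem~\ref{thm:selector-stability} requires, so they cancel in the difference $F_q^\mu-F_q^{\mu^0}$. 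This is what makes $M_t$ a valid envelope for Proposition~\ref{prop:selector-margin}.

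The remaining care point is the strict inequality. The proposition needs $M_t<\Gamma_t/2$, so the failure event includes equality. This is harmless because the tail bound is stated for $\ge u$.
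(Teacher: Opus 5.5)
Your proposal is correct and follows the paper's own argument. In the fixed-bonus case the envelope of Proposition~\ref{prop:selector-margin} reduces to $(1-\lambda_t)\|\mu-\mu^0\|_\infty$, so a slate change forces $\|\mu-\mu^0\|_\infty\ge \Gamma_t/(2(1-\lambda_t))$, and a union bound over the $|C_t|$ coordinates with two-sided tails $2\exp(-u^2/(2\sigma^2))$ gives the stated bound; your extra remarks on $\Gamma_t=0$, on the cancellation of similarity and novelty terms along the reference prefix, and on the non-strict failure event make explicit what the paper leaves implicit.
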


\begin{proposition}[LinUCB recovery in the exploration-only regime]
\label{prop:regret}
If target shaping, diversity, novelty, and variable slate size are disabled, $K_t=1$, and the exploration bonus is the standard LinUCB posterior width, the selector recovers the usual optimistic linear-bandit rule and obtains $\tilde O(d\sqrt{T})$ regret under the standard realizability and confidence assumptions~\cite{Li2010,Chu2011,Abbasi2011}.
\end{proposition}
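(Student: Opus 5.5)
The statement immediately above is Proposition~\ref{prop:regret} (LinUCB recovery). The plan is a reduction: show that once the listed levers are disabled, the governed selector is literally the LinUCB/OFUL arm-selection rule. The standard regret bound of \cite{Abbasi2011,Chu2011} then applies unchanged. No new regret analysis is needed; the work is to verify that every selector term either vanishes or becomes rank-irrelevant.

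\textbf{Step 1: collapse the greedy objective.} I would set the target-shaping coefficient to $\eta_{\mathrm{tg}}=0$, take $\lambda_t=0$, and take $\nu=0$. Fixing $K_t=1$ means the greedy loop runs one step from $S=\emptyset$. The diversity term $\max_{j\in S}\mathrm{sim}(i,j)$ is therefore vacuous even when $\lambda_t>0$, under the convention that a maximum over the empty set contributes nothing. The objective then reduces to $\tilde\mu_t(i)=\mu_t(i)+\alpha_t\sigma_t(i)$.

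\textbf{Step 2: match it to the UCB index.} I would instantiate the learner $\mathcal{L}$ as the ridge estimator $\hat\theta_t=V_t^{-1}\sum_{r<t}x_r y_r$ with $V_t=\lambda I+\sum_{r<t}x_rx_r^\top$. Its scores are $\mu_t(i)=x_i^\top\hat\theta_t$, and the uncertainty width is $\sigma_t(i)=\|x_i\|_{V_t^{-1}}$. The exploration weight $\alpha_t$ must equal the confidence radius $\beta_t(\delta)$ of the self-normalized bound.

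Here the clipped affine lever map of the supplementary setting details interferes, so I would stipulate that in this regime $\alpha_t$ is overridden by $\beta_t$. Equivalently, the width $\sigma_t(i)$ can be pre-scaled by $\beta_t$ with $\alpha_t\equiv1$. Deterministic item-id tie-breaking is one admissible choice among maximizers, and the bandit analysis holds for any maximizer. The selected arm is therefore exactly $\arg\max_i\, x_i^\top\hat\theta_t+\beta_t\|x_i\|_{V_t^{-1}}$.

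\textbf{Step 3: invoke the known bound.} Under realizability $\mathbb{E}[y\mid x]=x^\top\theta^\star$ with $\|\theta^\star\|\le S$, bounded features, and conditionally sub-Gaussian noise, the confidence ellipsoid holds with probability $1-\delta$ for all $t$. Optimism then gives per-round regret at most $2\beta_t\|x_t\|_{V_t^{-1}}$. Cauchy--Schwarz combined with the elliptical potential lemma yields $\tilde O(d\sqrt{T})$.

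The main obstacle is the interface mismatch rather than any analysis. The paper's round-level feedback, its attempt/correctness signal, and the state-driven lever maps are not a linear reward model on their own. I would make the realizability assumption explicit on the scalar reward fed to $\mathcal{L}$, and I would stress that the result is conditional on overriding the lever map.
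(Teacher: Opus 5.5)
Your proposal is correct and follows essentially the same reduction as the paper's proof: disable shaping, diversity, novelty, and slate-size adaptation so the single-step greedy objective collapses to $x_i^\top\hat\theta_t+\alpha_t\|x_i\|_{A_t^{-1}}$, set $\alpha_t$ to the LinUCB confidence radius, and invoke the standard optimism plus elliptical-potential analysis for $\tilde O(d\sqrt{T})$ regret. Your extra points are sensible clarifications of assumptions the paper leaves implicit: overriding the clipped lever map, treating item-id tie-breaking as an admissible argmax, and placing realizability on the scalar reward fed to $\mathcal{L}$.
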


\paragraph{Post-processing and audit trace.}
For Lemma~\ref{lem:postproc}, fix any side-information sequence $z=(z_1,\ldots,z_T)$ that is public or independent of $D$, or itself covered by the applicable DP composition. The full slate sequence is then a deterministic map
\[
  \pi_{\mathcal{L},\mathcal{S}}(D)=h_z(\mathcal{L}(D)).
\]
For neighboring datasets $D\sim D'$ and any measurable event $E$ over slate sequences,
\[
\Pr[h_z(\mathcal{L}(D))\in E]
\le e^\varepsilon \Pr[h_z(\mathcal{L}(D'))\in E]+\delta,
\]
because $h_z^{-1}(E)$ is an event over learner outputs. The audit trace corollary applies the same argument to the extended deterministic output $\tau_t$ defined in this appendix. If raw state or candidates are simply fixed, the displayed inequality is conditional on them; if the trace exposes a non-private anchor score trained on $D$, an end-to-end claim no longer follows.

\paragraph{User-level lift.}
For Lemma~\ref{lem:userdp}, let $D\sim_uD'$ differ in all records of one user and suppose the user contributes at most $B$ records. There exists a chain $D=D_0,D_1,\ldots,D_B=D'$ in which adjacent datasets are record-neighboring. Applying record-level DP along the chain gives the standard group-privacy bound
\[
  \Pr[\mathcal{L}(D)\in E]
  \le
  e^{B\varepsilon}\Pr[\mathcal{L}(D')\in E]
  +
  \left(\sum_{k=0}^{B-1}e^{k\varepsilon}\right)\delta.
\]
Since $\sum_{k=0}^{B-1}e^{k\varepsilon}=(e^{B\varepsilon}-1)/(e^\varepsilon-1)\le Be^{(B-1)\varepsilon}$, the displayed user-level statement follows, and Lemma~\ref{lem:postproc} transfers it to the pipeline.

\paragraph{Target-value drift.}
Under window enforcement, every accepted item in round $t$ lies in $[m_t-w_t,m_t+w_t]$, so $\bar d(A_t)$ also lies in that interval. Similarly, $\bar d(A_{t+1})\in[m_{t+1}-w_{t+1},m_{t+1}+w_{t+1}]$. Therefore
\[
|\bar d(A_{t+1})-\bar d(A_t)|
\le
|m_{t+1}-m_t|+w_t+w_{t+1}
\le
\eta+2\Delta_{\max}+w_t+w_{t+1}.
\]
The soft-window MMR selector only inherits this statement on the event that both realized accepted sets satisfy the window; the exact deterministic statement belongs to SCPO or any selector that enforces the window by construction.

\paragraph{Margin-certified selector stability.}
For Theorem~\ref{thm:selector-stability}, fix a step $q$ and assume by induction that the prefixes under $\mu$ and $\mu^0$ agree through $q-1$. For any competing item $j$ at step $q$,
\[
\begin{aligned}
F_q^\mu(i_q^0\mid S_{q-1}^0)-F_q^\mu(j\mid S_{q-1}^0)
&\ge
F_q^{\mu^0}(i_q^0\mid S_{q-1}^0)-F_q^{\mu^0}(j\mid S_{q-1}^0)-2M\\
&\ge \Gamma-2M>0.
\end{aligned}
\]
Thus the same item remains the unique maximizer. Induction over $q=1,\ldots,K$ preserves the entire ordered slate.

\paragraph{Top-$K$ stability.}
For Corollary~\ref{cor:topk-stability}, a top-$K$ set can change only if an anchor top-$K$ item $i$ is overtaken by a non-top-$K$ item $j$. Since $\mu^0(i)-\mu^0(j)\ge\gamma$,
\[
\mu^w(j)>\mu^w(i)
\quad\Rightarrow\quad
\xi_j-\xi_i>\gamma/(1-w).
\]
The pairwise tail condition gives probability at most $\exp[-\gamma^2/(2(1-w)^2\tau^2)]$ for each cross-boundary pair, and a union bound over $K(|C|-K)$ pairs gives the result. If $\|\mu^0-\mu^\star\|_\infty\le\beta$, the available gap decreases by at most $2\beta$, yielding $\gamma_\beta=(\gamma-2\beta)_+$.

\paragraph{Greedy slate stability.}
For Proposition~\ref{prop:selector-margin}, the shaped greedy objective perturbation at any candidate is bounded by
\[
M_t(\mu,\mu^0)
=(1-\lambda_t)(\|\mu-\mu^0\|_\infty+\|b_t^\mu-b_t^{\mu^0}\|_\infty)
+\|n_t^\mu-n_t^{\mu^0}\|_\infty.
\]
Theorem~\ref{thm:selector-stability} then applies with $M=M_t$ and $\Gamma=\Gamma_t$. In the fixed-bonus case, $M_t=(1-\lambda_t)\|\mu-\mu^0\|_\infty$. If all coordinate errors are two-sided sub-Gaussian with proxy $\sigma^2$, then
\[
\Pr[(1-\lambda_t)\|\mu-\mu^0\|_\infty\ge \Gamma_t/2]
\le
2|C_t|\exp[-\Gamma_t^2/(8(1-\lambda_t)^2\sigma^2)],
\]
which gives Corollary~\ref{cor:greedy-stability}.

\paragraph{LinUCB recovery.}
For Proposition~\ref{prop:regret}, disabling target shaping, diversity, novelty, and slate-size adaptation leaves a single-item selector with objective
\[
  x_i^\top\hat\theta_t+\alpha_t\|x_i\|_{A_t^{-1}}.
\]
Choosing $\alpha_t$ as the standard LinUCB confidence radius recovers the optimistic linear-bandit rule; the $\tilde O(d\sqrt{T})$ regret rate follows from the standard elliptical-potential analysis under the usual bounded-feature, bounded-noise, and realizability assumptions.

\section{Supplementary methodology details}
\label{app:methodology-details}

\begin{table}[H]
\centering
\small
\setlength{\tabcolsep}{6pt}
\caption{Corpus summary. Density (\textperthousand) is interactions per thousand possible user--item pairs. Splits are user-stratified at random seed 42.}
\label{tab:datasets-app}
\begin{tabular}{lcc}
\toprule
& \textbf{OULAD} & \textbf{EdNet} \\
\midrule
\# Users                    & 23{,}351                   & 296{,}701 \\
\# Items                    & 188                        & 11{,}555 \\
\# Interactions             & 173{,}739                  & 23{,}384{,}480 \\
Density (\textperthousand)  & 39.58                      & 6.82 \\
Avg. interactions per user  & 7.44                       & 78.8 \\
\bottomrule
\end{tabular}
\end{table}

\begin{figure}[H]
  \centering
  \includegraphics[width=\textwidth]{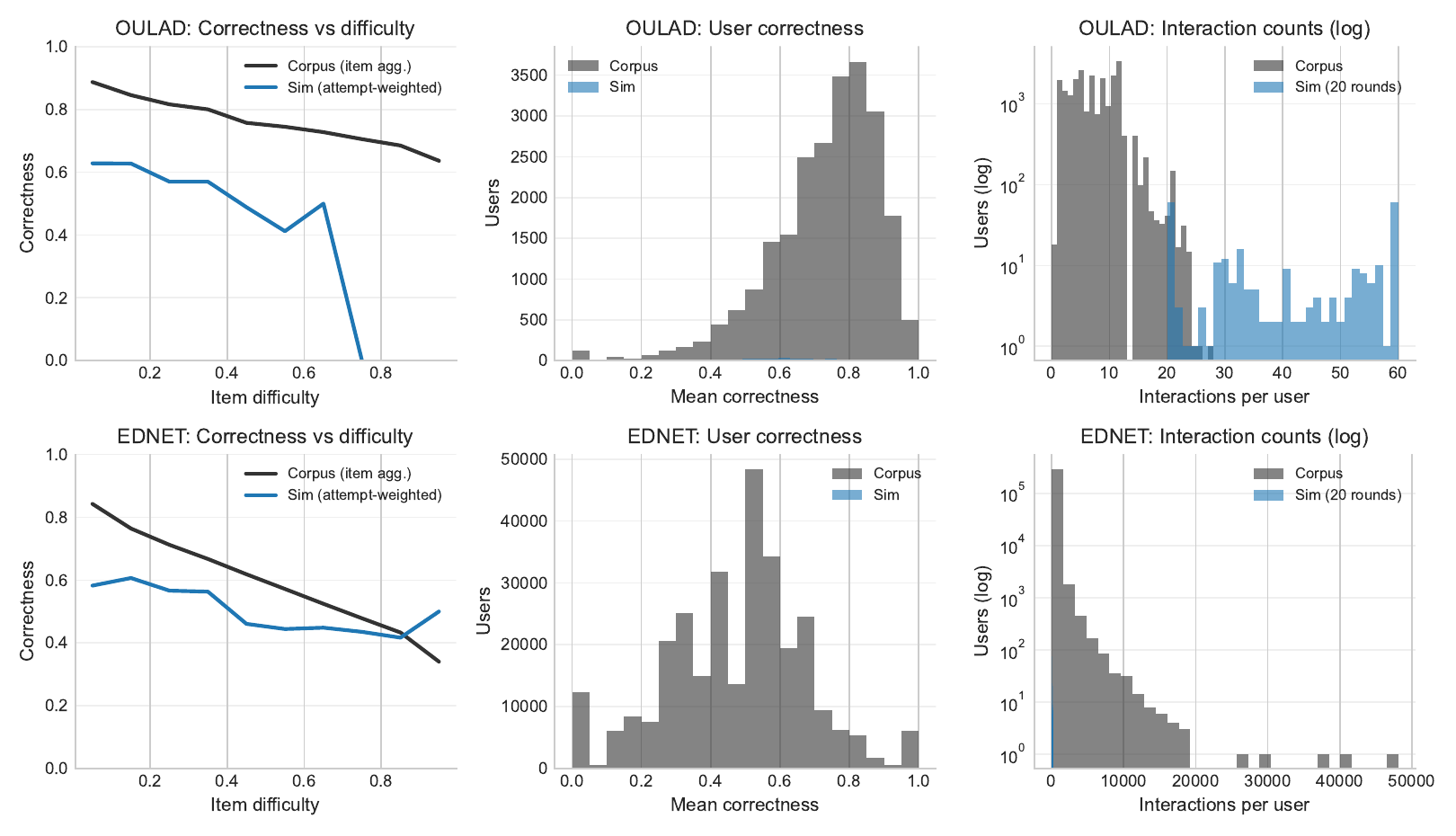}
  \Description{Simulator validation comparing corpus aggregates to closed-loop simulation outcomes on OULAD and EdNet.}
  \caption{Simulator validation: target-value vs.\ correctness, user-level correctness distributions, and interaction counts (log scale).}
  \label{fig:sim-validation-app}
\end{figure}

\begin{table}[H]
\centering
\small
\setlength{\tabcolsep}{4pt}
\caption{Nominal privacy configurations recorded by the experimental harness. The accountant labels use the stated $(q,\sigma,\delta,T)$ values, but the released artifact does not establish the sampler, population denominator, or protected-unit gradient aggregation needed to verify them as DP guarantees.}
\label{tab:dp-hyperparams-app}
\begin{tabular}{lccc}
\toprule
\textbf{Setting} & \textbf{Standard} & \textbf{Strong} & \textbf{Locked} \\
\midrule
Approx.\ $\varepsilon$ ($T = 20$) & $\approx 0.36$ & $\approx 0.16$ & $\approx 0.07$ \\
Approx.\ $\varepsilon$ ($T = 50$) & $\approx 0.58$ & $\approx 0.26$ & $\approx 0.11$ \\
Clipping norm $C$                 & 1.0             & 1.0             & 1.0 \\
Noise multiplier $\sigma$         & 1.2             & 2.0             & 3.0 \\
Sampling rate $q$                 & 0.020           & 0.015           & 0.010 \\
$\delta$                          & $10^{-5}$      & $10^{-5}$      & $10^{-5}$ \\
\bottomrule
\end{tabular}
\end{table}

\paragraph{Closed-loop protocol.}
Evaluation proceeds in closed loop: each round produces a slate, the response simulator emits item-level outcomes, and the policy state updates from observed responses. Candidate pools are generated once per round and replayed across policies under matched seeds. The design is factorial over four response simulators (\textsc{irt}, \textsc{mirt}, \textsc{zpd}, \textsc{contextual\_zpd}) and four user profiles (\textsc{struggling}, \textsc{steady}, \textsc{advancing}, \textsc{high\_flyer}), yielding paired session-level comparisons. Component-removal tests turn off target shaping, exploration, diversity, or slate-size adaptation one at a time to attribute changes to selector terms.

\paragraph{Raw stability experiments.}
The fixed-margin experiments use $|C|=100$, $K=10$, controlled gaps $\gamma\in\{0.35,0.50,0.70,0.90\}$, 64 independently permuted pools per gap, and 500 noise draws per pool, for 32{,}000 trials per cell. Anchor weights are $0$, $0.50$, and $0.75$. The descriptive fit uses the eight cells with smoothed Flip@$K$ in $[10^{-4},0.25]$; this rule was not preregistered. Its slope is $-0.220$ (SE 0.0042; 95\% $t$ CI $[-0.231,-0.210]$). For all anchor diagnostics, each source score vector is standardized within the candidate pool before blending. The real-anchor OULAD run uses 64 user-specific pools and 200 noise draws per pool; MovieLens-25M and Amazon Musical Instruments use 48 user-specific pools and the same number of draws. Low-rank correlated noise is summarized by the empirical pairwise scale $\hat\tau^2$ used in Corollary~\ref{cor:topk-stability}.

\section{Supplementary related-work object map}
\label{app:related-map}

\begin{table}[H]
\centering
\small
\caption{Analytic object emphasized by related families. The entries describe what each family usually studies, not whether the family is deficient.}
\label{tab:benefits-app}
\begin{tabularx}{\textwidth}{p{0.19\textwidth}p{0.25\textwidth}Xp{0.26\textwidth}}
\toprule
Family & Primary object usually analyzed & Usually outside the primary analysis & Relation to this paper \\
\midrule
DP recommendation \cite{McSherryMironov2009,Abadi2016,Dwork2014} & Privacy of the learned model or training algorithm & Deterministic post-layer, audit trace, and score-to-slate stability & Private regime lifts learner DP through a deterministic selector only under admissible/accounted side inputs \\
Re-ranking / slate construction \cite{CarbonellGoldstein1998,KunaverPozrl2017,Steck2018,SinghJoachims2018,Geyik2019} & Deterministic slate objective for diversity, calibration, fairness, or novelty & Learner privacy and explicit scorer-noise model & This paper treats the selector as a bounded surface and studies how it composes with learner privacy and score stability \\
Contextual \& private bandits \cite{Li2010,Chu2011,Abbasi2011,Shariff2018,Mishra2015} & Exploration/regret, sometimes with privacy built into the bandit & Replayable deterministic selection layer and shaped/diverse slate assembly & Proposition~\ref{prop:regret} recovers LinUCB only in the reduced exploration-only regime \\
Safe / constrained RL for recs \cite{Ie2019SlateQ,Achiam2017CPO} & Constraint satisfaction inside a learned policy & Privacy composition and deterministic audit replay & Lemma~\ref{lem:drift} gives a selector-side drift bound under window enforcement without retraining the learner \\
Off-policy learning / evaluation \cite{Swaminathan2015CRM} & Estimating or optimizing policy value from logged data & Structural privacy/audit/stability guarantees of the deployed post-layer & Complementary: such estimators could train $\mathcal{L}$ before the deterministic selector is applied \\
Governance / responsible AI \cite{SladePrinsloo2013,drachsler2016delicate,Mitchell2019ModelCards} & Documentation, review, and process artifacts & Formal algorithmic replay and privacy accounting & The audit trace is a formal decision artifact that governance processes could inspect \\
\midrule
\textbf{Private regime (this paper)} & DP learner plus deterministic selector with admissible/accounted side inputs & Non-private auxiliary anchor trained on $D$ & Privacy, audit replay, and drift under stated assumptions; score stability only if the anchor is admissible or DP-accounted \\
\textbf{Anchored regime (this paper)} & Fixed/non-private anchor plus deterministic selector & End-to-end pipeline DP & Top-$K$ score stability and greedy-slate stability under a margin condition; privacy claim is deliberately excluded \\
\bottomrule
\end{tabularx}
\end{table}

\section{Additional tables}
\label{app:additional-tables}

Full versions of additional result tables are reproduced below in compact form for reference; they were generated by the same pipeline as the main-text summaries under identical seeds and candidate pools.

\begin{table}[H]
  \centering
  \scriptsize
  \caption{Paired deltas vs.\ \textsc{ALS} (mean [95\% CI]) for utility and engagement metrics and absolute target-value jump. Probe deltas are reported on the $\times 10^{-3}$ scale.}
  \label{tab:rq1-summary-app}
  \begin{adjustbox}{max width=\textwidth}
    \begin{tabular}{lllrrrr}
\toprule
Dataset & Policy & $\Delta$ Corr.\ $\uparrow$ & $\Delta$ Post-test ($\times 10^{-3}$) $\uparrow$ & $\Delta$ Telemetry ($\times 10^{-3}$) $\uparrow$ & $\Delta$ Attempts/Round $\uparrow$ & Abs. diff. jump $\downarrow$ \\
\midrule
OULAD & Adaptive $-$ ALS & \underline{0.121 [0.105, 0.137]} & \textbf{14.824 [13.172, 16.350]} & \textbf{69.248 [44.959, 91.789]} & \underline{0.714 [0.650, 0.777]} & \textbf{0.105} \\
OULAD & Adaptive-SCPO $-$ ALS & \textbf{0.169 [-0.051, 0.390]} & \underline{13.485 [-7.606, 34.577]} & \underline{64.098 [26.179, 102.016]} & \textbf{0.975 [0.950, 1.000]} & \underline{0.145} \\
EdNet & Adaptive $-$ ALS & \underline{0.026 [0.014, 0.039]} & 0.032 [-1.062, 1.148] & \underline{22.168 [1.875, 41.991]} & \underline{0.140 [0.115, 0.168]} & \underline{0.195} \\
EdNet & Adaptive-SCPO $-$ ALS & 0.023 [0.011, 0.034] & \underline{1.530 [0.505, 2.597]} & 10.883 [-7.953, 31.653] & 0.134 [0.109, 0.160] & \textbf{0.173} \\
EdNet & Adaptive+ALS ($w=0.75$) $-$ ALS & \textbf{0.037 [0.025, 0.049]} & \textbf{3.137 [2.016, 4.338]} & \textbf{25.597 [4.873, 45.559]} & \textbf{0.241 [0.211, 0.271]} & -- \\
\bottomrule
\end{tabular}

  \end{adjustbox}
\end{table}

\begin{table}[H]
  \centering
  \scriptsize
  \caption{Response-model robustness (mean $\pm$ SD, min) versus \textsc{ALS}. Probe deltas on the $\times 10^{-3}$ scale.}
  \label{tab:choice-robustness-app}
  \begin{adjustbox}{max width=\textwidth}
    \begin{tabular}{lllll}
\toprule
Dataset & Policy & $\Delta$ Corr.\ $\uparrow$ & $\Delta$ Post-test ($\times 10^{-3}$) $\uparrow$ & $\Delta$ Uptake $\uparrow$ \\
\midrule
OULAD & Adaptive $-$ ALS & \textbf{0.114±0.006 (min 0.106)} & \textbf{12.456±1.840 (min 10.034)} & \textbf{0.144±0.009 (min 0.130)} \\
OULAD & Adaptive+ALS ($w=0.75$) $-$ ALS & \underline{0.010±0.006 (min 0.002)} & \underline{1.873±0.492 (min 1.164)} & \underline{0.026±0.002 (min 0.024)} \\
EdNet & Adaptive $-$ ALS & \underline{0.025±0.010 (min 0.014)} & \underline{2.647±1.672 (min 0.477)} & \underline{0.039±0.015 (min 0.021)} \\
EdNet & Adaptive+ALS ($w=0.75$) $-$ ALS & \textbf{0.033±0.008 (min 0.028)} & \textbf{3.469±1.457 (min 2.300)} & \textbf{0.060±0.007 (min 0.054)} \\
\bottomrule
\end{tabular}

  \end{adjustbox}
\end{table}

\begin{table}[H]
\centering
\scriptsize
\setlength{\tabcolsep}{4pt}
\caption{The eight fixed-margin calibration cells used for the descriptive exponent fit. Here $x=\gamma^2/((1-w)^2\sigma^2)$ and $M=K(|C|-K)$. The non-saturation rule is smoothed Flip@$K\in[10^{-4},0.25]$; it was not preregistered. OLS gives slope $-0.220$ (SE 0.0042; 95\% $t$ CI $[-0.231,-0.210]$) against reference $-1/4$ ($R^2=0.998$).}
\label{tab:fixed-margin-fit-cells}
\begin{adjustbox}{max width=\textwidth}
\begin{tabular}{rrrrrrr}
\toprule
$\gamma$ & $w$ & $\sigma$ & Trials & $x$ & Smoothed Flip@$K$ & $\log(\hat p/M)$ \\
\midrule
0.35 & 0.50 & 0.12 & 32{,}000 & 34.0278 & 0.01230 & -11.2008 \\
0.35 & 0.75 & 0.25 & 32{,}000 & 31.3600 & 0.02417 & -10.5250 \\
0.50 & 0.50 & 0.18 & 32{,}000 & 30.8642 & 0.02630 & -10.4407 \\
0.50 & 0.75 & 0.35 & 32{,}000 & 32.6531 & 0.01708 & -10.8724 \\
0.70 & 0.00 & 0.12 & 32{,}000 & 34.0278 & 0.01223 & -11.2059 \\
0.70 & 0.50 & 0.25 & 32{,}000 & 31.3600 & 0.02333 & -10.5605 \\
0.90 & 0.00 & 0.18 & 32{,}000 & 25.0000 & 0.09229 & -9.1852 \\
0.90 & 0.50 & 0.35 & 32{,}000 & 26.4490 & 0.06517 & -9.5332 \\
\bottomrule
\end{tabular}

\end{adjustbox}
\end{table}

\begin{table}[H]
\centering
\scriptsize
\caption{Fixed-margin calibration fits by noise family. Independent Gaussian noise is fit against the closed-form specialization's $\sigma^2$ scale; low-rank correlated noise is fit against the pairwise $\hat\tau_{\max}^2$ scale from Corollary~\ref{cor:topk-stability}.}
\label{tab:fixed-margin-pairtau-fit}
\begin{adjustbox}{max width=\columnwidth}
\begin{tabular}{llrrrr}
\toprule
Noise family & Fit axis & Slope & Reference & $R^2$ & Cells \\
\midrule
independent Gaussian & $\gamma^2/((1-w)^2\sigma^2)$ & -0.220 & -0.250 & 0.998 & 8 \\
low-rank r=4, $\rho$=0.50 & $\gamma^2/((1-w)^2\hat\tau_{\max}^2)$ & -0.506 & -0.500 & 0.997 & 12 \\
\bottomrule
\end{tabular}

\end{adjustbox}
\end{table}

\begin{table}[H]
  \centering
  \scriptsize
  \caption{Rating-threshold sensitivity for real-anchor raw Flip@$K$ checks. The stabilizing effect of anchor weight $w=0.75$ persists across binarization thresholds.}
  \label{tab:topk-flip-threshold-sensitivity}
  \begin{adjustbox}{max width=\textwidth}
\begin{tabular}{lrrrrrr}
\toprule
Dataset & Threshold & $\sigma$ & Flip@$K$ $w=0$ & Flip@$K$ $w=0.75$ & $\Delta$ Flip@$K$ & Rel. drop \\
\midrule
Amazon Musical Instruments & 3.500 & 0.050 & 0.262 & 0.045 & 0.216 & 0.827 \\
Amazon Musical Instruments & 3.500 & 0.100 & 0.490 & 0.110 & 0.380 & 0.776 \\
Amazon Musical Instruments & 3.500 & 0.200 & 0.768 & 0.254 & 0.514 & 0.669 \\
Amazon Musical Instruments & 4.000 & 0.050 & 0.301 & 0.082 & 0.219 & 0.727 \\
Amazon Musical Instruments & 4.000 & 0.100 & 0.512 & 0.162 & 0.350 & 0.683 \\
Amazon Musical Instruments & 4.000 & 0.200 & 0.781 & 0.306 & 0.475 & 0.608 \\
Amazon Musical Instruments & 4.500 & 0.050 & 0.272 & 0.076 & 0.196 & 0.720 \\
Amazon Musical Instruments & 4.500 & 0.100 & 0.492 & 0.133 & 0.359 & 0.730 \\
Amazon Musical Instruments & 4.500 & 0.200 & 0.758 & 0.270 & 0.488 & 0.644 \\
MovieLens-25M & 3.500 & 0.050 & 0.316 & 0.077 & 0.239 & 0.756 \\
MovieLens-25M & 3.500 & 0.100 & 0.528 & 0.161 & 0.367 & 0.694 \\
MovieLens-25M & 3.500 & 0.200 & 0.788 & 0.315 & 0.474 & 0.601 \\
MovieLens-25M & 4.000 & 0.050 & 0.352 & 0.102 & 0.250 & 0.709 \\
MovieLens-25M & 4.000 & 0.100 & 0.605 & 0.176 & 0.429 & 0.709 \\
MovieLens-25M & 4.000 & 0.200 & 0.859 & 0.356 & 0.504 & 0.586 \\
MovieLens-25M & 4.500 & 0.050 & 0.254 & 0.081 & 0.173 & 0.682 \\
MovieLens-25M & 4.500 & 0.100 & 0.445 & 0.147 & 0.297 & 0.669 \\
MovieLens-25M & 4.500 & 0.200 & 0.697 & 0.253 & 0.444 & 0.636 \\
\bottomrule
\end{tabular}
  \end{adjustbox}
\end{table}

\begin{table}[H]
  \centering
  \scriptsize
  \caption{Real-anchor raw Flip@$K$ checks on one educational corpus and two ratings corpora. Higher anchor weight consistently reduces raw top-$K$ flips under both independent and low-rank correlated score noise.}
  \label{tab:topk-flip-real-anchor-summary}
  \begin{adjustbox}{max width=\textwidth}
\begin{tabular}{llrrrrrrr}
\toprule
Dataset & Noise & $\sigma$ & Flip@$K$ $w=0$ & Flip@$K$ $w=0.50$ & Flip@$K$ $w=0.75$ & $\Delta$ Flip@$K$ & Rel. drop & Jaccard $w=0.75$ \\
\midrule
Amazon Musical Instruments & independent & 0.050 & 0.266 & 0.126 & 0.066 & 0.200 & 0.751 & 0.988 \\
Amazon Musical Instruments & independent & 0.100 & 0.494 & 0.264 & 0.128 & 0.367 & 0.742 & 0.976 \\
Amazon Musical Instruments & independent & 0.200 & 0.752 & 0.493 & 0.262 & 0.491 & 0.652 & 0.951 \\
Amazon Musical Instruments & low-rank r=4, $\rho$=0.50 & 0.050 & 0.258 & 0.122 & 0.064 & 0.193 & 0.750 & 0.988 \\
Amazon Musical Instruments & low-rank r=4, $\rho$=0.50 & 0.100 & 0.478 & 0.253 & 0.123 & 0.355 & 0.742 & 0.977 \\
Amazon Musical Instruments & low-rank r=4, $\rho$=0.50 & 0.200 & 0.741 & 0.475 & 0.257 & 0.484 & 0.653 & 0.952 \\
MovieLens-25M & independent & 0.050 & 0.410 & 0.229 & 0.114 & 0.296 & 0.721 & 0.978 \\
MovieLens-25M & independent & 0.100 & 0.653 & 0.412 & 0.229 & 0.424 & 0.649 & 0.957 \\
MovieLens-25M & independent & 0.200 & 0.875 & 0.648 & 0.410 & 0.465 & 0.532 & 0.921 \\
MovieLens-25M & low-rank r=4, $\rho$=0.50 & 0.050 & 0.417 & 0.230 & 0.111 & 0.306 & 0.734 & 0.979 \\
MovieLens-25M & low-rank r=4, $\rho$=0.50 & 0.100 & 0.653 & 0.408 & 0.227 & 0.426 & 0.652 & 0.957 \\
MovieLens-25M & low-rank r=4, $\rho$=0.50 & 0.200 & 0.872 & 0.651 & 0.404 & 0.468 & 0.537 & 0.922 \\
OULAD & independent & 0.020 & 0.167 & 0.080 & 0.032 & 0.135 & 0.808 & 0.994 \\
OULAD & independent & 0.050 & 0.372 & 0.211 & 0.107 & 0.265 & 0.713 & 0.980 \\
OULAD & independent & 0.100 & 0.603 & 0.368 & 0.216 & 0.387 & 0.642 & 0.960 \\
OULAD & low-rank r=1, $\rho$=0.25 & 0.020 & 0.167 & 0.085 & 0.030 & 0.137 & 0.818 & 0.994 \\
OULAD & low-rank r=1, $\rho$=0.25 & 0.050 & 0.365 & 0.205 & 0.103 & 0.262 & 0.718 & 0.981 \\
OULAD & low-rank r=1, $\rho$=0.25 & 0.100 & 0.599 & 0.362 & 0.201 & 0.398 & 0.665 & 0.962 \\
OULAD & low-rank r=1, $\rho$=0.75 & 0.020 & 0.156 & 0.080 & 0.033 & 0.123 & 0.789 & 0.994 \\
OULAD & low-rank r=1, $\rho$=0.75 & 0.050 & 0.333 & 0.200 & 0.100 & 0.233 & 0.699 & 0.982 \\
OULAD & low-rank r=1, $\rho$=0.75 & 0.100 & 0.544 & 0.336 & 0.194 & 0.351 & 0.645 & 0.963 \\
OULAD & low-rank r=4, $\rho$=0.25 & 0.020 & 0.170 & 0.083 & 0.032 & 0.138 & 0.813 & 0.994 \\
OULAD & low-rank r=4, $\rho$=0.25 & 0.050 & 0.374 & 0.211 & 0.105 & 0.269 & 0.719 & 0.981 \\
OULAD & low-rank r=4, $\rho$=0.25 & 0.100 & 0.594 & 0.368 & 0.210 & 0.383 & 0.646 & 0.961 \\
OULAD & low-rank r=4, $\rho$=0.75 & 0.020 & 0.160 & 0.078 & 0.035 & 0.124 & 0.778 & 0.994 \\
OULAD & low-rank r=4, $\rho$=0.75 & 0.050 & 0.358 & 0.197 & 0.098 & 0.260 & 0.727 & 0.982 \\
OULAD & low-rank r=4, $\rho$=0.75 & 0.100 & 0.583 & 0.351 & 0.201 & 0.383 & 0.656 & 0.962 \\
\bottomrule
\end{tabular}
  \end{adjustbox}
\end{table}

\begin{table}[H]
\centering
\scriptsize
\caption{OULAD positive-margin gate for the selector-margin certificate. The raw anchor gap $\gamma$ and greedy selector margin $\Gamma_t$ are positive in all logged records, but $\Gamma_t$ is small, so certificate usefulness depends on small perturbation envelopes.}
\label{tab:oulad-margin-gate}
\begin{tabular}{lrrrr}
\toprule
Quantity & Mean & Median & Positive rate & Near-zero rate \\
\midrule
$\gamma$ & 0.08462 & 0.05775 & 1.000 & 0.000 \\
$\Gamma_t$ & 0.0008477 & 0.0007332 & 1.000 & 0.000 \\
\bottomrule
\end{tabular}
\end{table}

\begin{table}[H]
\centering
\scriptsize
\caption{OULAD selector-margin perturbation check. Certified means $M_t<\Gamma_t/2$ under the fixed-bonus envelope. Certified violations are zero across all 8,000 perturbation trials.}
\label{tab:oulad-selector-perturb}
\begin{tabular}{rrrrrr}
\toprule
$\sigma_{\mathrm{pert}}$ & Certified & Same order & Same set & Certified violations & Median $M_t/\Gamma_t$ \\
\midrule
5e-05 & 0.9962 & 1 & 1 & 0 & 0.1343 \\
0.0001 & 0.9844 & 1 & 1 & 0 & 0.2698 \\
0.0002 & 0.4306 & 0.9975 & 0.9988 & 0 & 0.537 \\
0.0005 & 0.01188 & 0.9125 & 0.9975 & 0 & 1.347 \\
0.001 & 0 & 0.6569 & 0.9875 & 0 & 2.725 \\
\bottomrule
\end{tabular}

\end{table}

\begin{table}[H]
\centering
\scriptsize
\caption{EdNet positive-margin gate for the selector-margin certificate. The raw anchor gap $\gamma$ and greedy selector margin $\Gamma_t$ are positive in all logged records.}
\label{tab:ednet-margin-gate}
\begin{tabular}{lrrrr}
\toprule
Quantity & Mean & Median & Positive rate & Near-zero rate \\
\midrule
$\gamma$ & 0.01007 & 0.00706 & 1.000 & 0.000 \\
$\Gamma_t$ & 0.0003949 & 0.0002794 & 1.000 & 0.000 \\
\bottomrule
\end{tabular}
\end{table}

\begin{table}[H]
\centering
\scriptsize
\caption{EdNet selector-margin perturbation check. Certified means $M_t<\Gamma_t/2$ under the fixed-bonus envelope. Certified violations are zero across all perturbation trials.}
\label{tab:ednet-selector-perturb}
\begin{tabular}{rrrrrr}
\toprule
$\sigma_{\mathrm{pert}}$ & Certified & Same order & Same set & Certified violations & Median $M_t/\Gamma_t$ \\
\midrule
5e-05 & 0.5931 & 0.9519 & 0.9775 & 0 & 0.3442 \\
0.0001 & 0.3944 & 0.8925 & 0.9587 & 0 & 0.682 \\
0.0002 & 0.1856 & 0.8275 & 0.9375 & 0 & 1.387 \\
0.0005 & 0.003125 & 0.6669 & 0.8962 & 0 & 3.452 \\
0.001 & 0 & 0.4275 & 0.8225 & 0 & 7.198 \\
\bottomrule
\end{tabular}
\end{table}

\begin{figure}[H]
  \centering
  \begin{subfigure}[t]{0.48\textwidth}
    \centering
    \includegraphics[width=\linewidth]{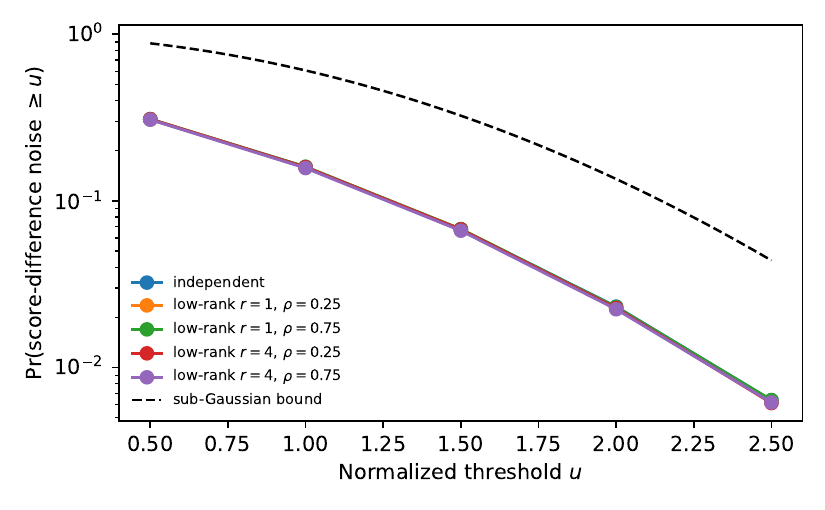}
    \caption{Pairwise score-noise tails}
  \end{subfigure}\hfill
  \begin{subfigure}[t]{0.48\textwidth}
    \centering
    \includegraphics[width=\linewidth]{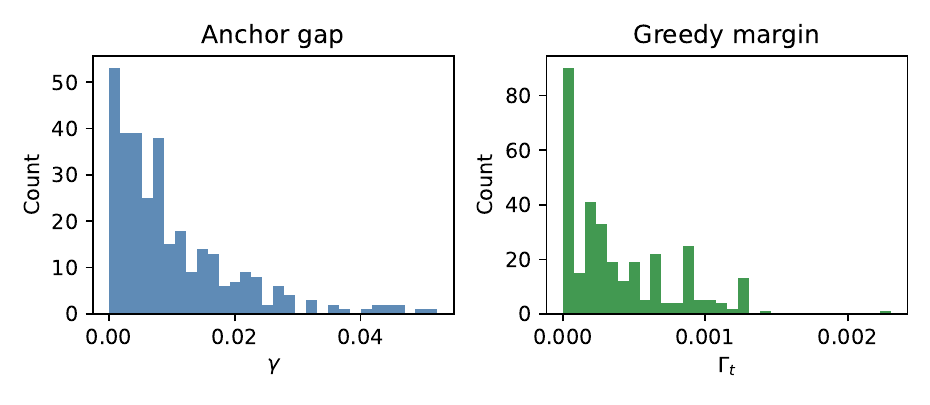}
    \caption{EdNet anchor and greedy margins}
  \end{subfigure}
  \caption{Full-size diagnostics behind the stability checks. Left: injected perturbations, including low-rank correlated noise, remain below the pairwise sub-Gaussian envelope when normalized by $\hat\tau$. Right: EdNet has positive anchor gaps and greedy margins in all logged candidate pools.}
  \Description{The left panel plots empirical upper-tail probabilities for independent and low-rank score noise against a dashed sub-Gaussian envelope; every measured curve remains below the envelope. The right panel shows EdNet histograms of positive anchor top-10 gaps and positive minimum greedy-step margins, with the greedy margins concentrated closer to zero.}
  \label{fig:tail-margin-diagnostics-app}
\end{figure}

\begin{table}[H]
\centering
\scriptsize
\caption{OULAD pairwise score-difference tail diagnostic. Noise differences are normalized by their pairwise $\tau$ proxy; all measured upper tails are below the sub-Gaussian envelope used by Corollary~\ref{cor:topk-stability}.}
\label{tab:pairwise-tail-oulad}
\begin{adjustbox}{max width=\textwidth}
\begin{tabular}{lrrrrrr}
\toprule
Noise & $r$ & $\rho$ & $\sigma$ & $u$ & Tail & Bound \\
\midrule
indep. & 0 & 0 & 0.02 & 0.5 & 0.3089 & 0.8825 \\
indep. & 0 & 0 & 0.02 & 1 & 0.1593 & 0.6065 \\
indep. & 0 & 0 & 0.02 & 1.5 & 0.06751 & 0.3247 \\
indep. & 0 & 0 & 0.02 & 2 & 0.02322 & 0.1353 \\
indep. & 0 & 0 & 0.02 & 2.5 & 0.006372 & 0.04394 \\
indep. & 0 & 0 & 0.05 & 1 & 0.1586 & 0.6065 \\
indep. & 0 & 0 & 0.05 & 2 & 0.02294 & 0.1353 \\
indep. & 0 & 0 & 0.1 & 1 & 0.1582 & 0.6065 \\
indep. & 0 & 0 & 0.1 & 2 & 0.02272 & 0.1353 \\
low-rank & 1 & 0.25 & 0.02 & 0.5 & 0.3092 & 0.8825 \\
low-rank & 1 & 0.25 & 0.02 & 1 & 0.1593 & 0.6065 \\
low-rank & 1 & 0.25 & 0.02 & 1.5 & 0.06719 & 0.3247 \\
low-rank & 1 & 0.25 & 0.02 & 2 & 0.02291 & 0.1353 \\
low-rank & 1 & 0.25 & 0.02 & 2.5 & 0.006254 & 0.04394 \\
low-rank & 1 & 0.25 & 0.05 & 1 & 0.1579 & 0.6065 \\
low-rank & 1 & 0.25 & 0.05 & 2 & 0.02288 & 0.1353 \\
low-rank & 1 & 0.25 & 0.1 & 1 & 0.1583 & 0.6065 \\
low-rank & 1 & 0.25 & 0.1 & 2 & 0.02261 & 0.1353 \\
\bottomrule
\end{tabular}

\end{adjustbox}
\end{table}

\begin{table}[H]
\centering
\scriptsize
\caption{Descriptive repeated-seed learner-update diagnostics on OULAD across nominal configurations. Each row uses 200 seeds, eight updates per seed, 16 fixed pools, and 160{,}000 pair draws that share seeds, pools, and scores. The artifact does not establish a complete DP mechanism for these configurations.}
\label{tab:dp-sgd-tail-oulad-tiers}
\begin{adjustbox}{max width=\textwidth}
\begin{tabular}{lrrrrrrrrrr}
\toprule
Tier & $\varepsilon$ & Mean $\hat\tau$ & Tail $u=2$ & Bound $u=2$ & Tail $u=2.5$ & Bound $u=2.5$ & Flip@$K$ $w=0$ & Flip@$K$ $w=0.50$ & Flip@$K$ $w=0.75$ & Jaccard $w=0.75$ \\
\midrule
\texttt{eps\_1} & 0.2284 & 0.6672 & 0.02063 & 0.1353 & 0.003025 & 0.04394 & 1 & 0.9988 & 0.945 & 0.7229 \\
\texttt{eps\_05} & 0.1022 & 0.6683 & 0.02058 & 0.1353 & 0.002975 & 0.04394 & 1 & 0.9988 & 0.9466 & 0.7218 \\
\texttt{eps\_02} & 0.04857 & 0.6688 & 0.02059 & 0.1353 & 0.002938 & 0.04394 & 1 & 0.9984 & 0.9466 & 0.7215 \\
\bottomrule
\end{tabular}
\end{adjustbox}
\end{table}

\begin{table}[H]
\centering
\scriptsize
\caption{Closed-loop paired deltas versus \textsc{ALS} across OULAD and EdNet. Probe deltas are reported on the $\times 10^{-3}$ scale.}
\label{tab:rq1-vs-als-app}
\begin{adjustbox}{max width=\textwidth}
\begin{tabular}{lllllll}
\toprule
Dataset & Comparison & $\Delta$ Corr. & $\Delta$ Post-test ($\times 10^{-3}$) & $\Delta$ Telemetry ($\times 10^{-3}$) & $\Delta$ Uptake & $\Delta$ Attempts/Round \\
\midrule
OULAD & Adaptive $-$ ALS & 0.121 [0.105, 0.137] & 14.824 [13.172, 16.350] & 69.248 [44.959, 91.789] & 0.150 [0.138, 0.162] & 0.714 [0.650, 0.777] \\
OULAD & Adaptive-SCPO $-$ ALS & 0.169 [-0.051, 0.390] & 13.485 [-7.606, 34.577] & 64.098 [26.179, 102.016] & 0.195 [0.190, 0.200] & 0.975 [0.950, 1.000] \\
OULAD & Adaptive+ALS ($w=0.75$) $-$ ALS & 0.197 [0.004, 0.390] & 15.862 [3.468, 28.257] & 68.054 [40.751, 95.357] & 0.240 [0.200, 0.280] & 1.200 [1.000, 1.400] \\
OULAD & ALS+Selector $-$ ALS & -0.002 [-0.013, 0.009] & -0.163 [-0.794, 0.485] & -12.940 [-31.719, 6.588] & 0.020 [0.019, 0.021] & 0.002 [-0.000, 0.004] \\
EdNet & Adaptive $-$ ALS & 0.026 [0.014, 0.039] & 0.032 [-1.062, 1.148] & 22.168 [1.875, 41.991] & 0.039 [0.034, 0.043] & 0.140 [0.115, 0.168] \\
EdNet & Adaptive-SCPO $-$ ALS & 0.023 [0.011, 0.034] & 1.530 [0.505, 2.597] & 10.883 [-7.953, 31.653] & 0.037 [0.032, 0.042] & 0.134 [0.109, 0.160] \\
EdNet & Adaptive+ALS ($w=0.75$) $-$ ALS & 0.037 [0.025, 0.049] & 3.137 [2.016, 4.338] & 25.597 [4.873, 45.559] & 0.057 [0.052, 0.063] & 0.241 [0.211, 0.271] \\
EdNet & ALS+Selector $-$ ALS & 0.004 [-0.005, 0.014] & 0.150 [-0.567, 0.884] & 15.519 [-0.431, 30.961] & 0.012 [0.010, 0.013] & -0.001 [-0.005, 0.004] \\
\bottomrule
\end{tabular}

\end{adjustbox}
\end{table}

\begin{table}[H]
\centering
\scriptsize
\caption{Alignment between telemetry probes and held-out utility probes. Sign agreement reports the fraction of paired sessions where the telemetry and utility deltas have the same sign.}
\label{tab:probe-alignment-app}
\begin{adjustbox}{max width=\textwidth}
\begin{tabular}{llrrrr}
\toprule
Dataset & Comparison & N & Pearson $r$ & Spearman $\rho$ & Sign agree. \\
\midrule
OULAD & Adaptive $-$ ALS & 256 & 0.312 & 0.268 & 0.684 \\
OULAD & ALS+Selector $-$ ALS & 256 & 0.534 & 0.513 & 0.711 \\
EdNet & Adaptive $-$ ALS & 256 & 0.337 & 0.354 & 0.566 \\
EdNet & Adaptive-SCPO $-$ ALS & 256 & 0.343 & 0.334 & 0.625 \\
EdNet & Adaptive+ALS ($w=0.75$) $-$ ALS & 256 & 0.318 & 0.323 & 0.633 \\
EdNet & ALS+Selector $-$ ALS & 256 & 0.559 & 0.551 & 0.852 \\
\bottomrule
\end{tabular}

\end{adjustbox}
\end{table}

\begin{table}[H]
\centering
\scriptsize
\caption{Configured-update scorer deltas under matched candidate pools. The nominal settings are not independently verified DP mechanisms; values are configuration minus reference paired deltas.}
\label{tab:rq2-dp-deltas-app}
\begin{adjustbox}{max width=\textwidth}
\begin{tabular}{lllll}
\toprule
Dataset & Privacy Setting & $\Delta$ Corr.\ $\uparrow$ & $\Delta$ Post-test ($\times 10^{-3}$) $\uparrow$ & $\Delta$ Uptake $\uparrow$ \\
\midrule
OULAD & Standard ($\varepsilon \approx 0.36$) & -0.045 [-0.055, -0.037] & 0.741 [-0.407, 1.891] & \underline{-0.036 [-0.040, -0.031]} \\
OULAD & Strong ($\varepsilon \approx 0.16$) & \textbf{-0.042 [-0.051, -0.033]} & \underline{0.883 [-0.306, 2.090]} & \textbf{-0.030 [-0.034, -0.026]} \\
OULAD & Locked ($\varepsilon \approx 0.07$) & \underline{-0.044 [-0.053, -0.035]} & \textbf{1.321 [0.173, 2.434]} & \textbf{-0.030 [-0.034, -0.026]} \\
EdNet & Standard ($\varepsilon \approx 0.36$) & \underline{-0.032 [-0.040, -0.023]} & -1.530 [-2.440, -0.610] & \textbf{-0.032 [-0.038, -0.027]} \\
EdNet & Strong ($\varepsilon \approx 0.16$) & \textbf{-0.030 [-0.039, -0.021]} & \textbf{-1.099 [-2.012, -0.139]} & \textbf{-0.032 [-0.037, -0.028]} \\
EdNet & Locked ($\varepsilon \approx 0.07$) & \underline{-0.032 [-0.041, -0.023]} & \underline{-1.215 [-2.183, -0.291]} & \textbf{-0.032 [-0.038, -0.028]} \\
\bottomrule
\end{tabular}

\end{adjustbox}
\end{table}

\begin{table}[H]
\centering
\scriptsize
\caption{Descriptive repeated-seed OULAD learner-update diagnostics at two update depths. Each row uses 200 seeds and 160{,}000 normalized pair draws that share seeds, pools, and item scores and therefore are not independent inferential observations. The released artifact does not establish a complete DP mechanism for these configurations.}
\label{tab:dp-sgd-tail-oulad-steps}
\begin{adjustbox}{max width=\textwidth}
\begin{tabular}{llrrrrrr}
\toprule
Steps & Tier & Mean $\varepsilon$ & Mean $\hat\tau$ & Tail $u=2$ & Tail $u=2.5$ & Flip@$K$ $w=0.75$ & Jaccard $w=0.75$ \\
\midrule
8 & Standard & 0.228 & 0.667 & 0.0206 & 0.0030 & 0.945 & 0.723 \\
8 & Strong & 0.102 & 0.668 & 0.0206 & 0.0030 & 0.947 & 0.722 \\
8 & Locked & 0.049 & 0.669 & 0.0206 & 0.0029 & 0.947 & 0.722 \\
80 & Standard & 0.738 & 1.166 & 0.0228 & 0.0062 & 0.985 & 0.629 \\
80 & Strong & 0.326 & 1.172 & 0.0227 & 0.0062 & 0.986 & 0.627 \\
80 & Locked & 0.154 & 1.174 & 0.0227 & 0.0062 & 0.986 & 0.627 \\
\bottomrule
\end{tabular}

\end{adjustbox}
\end{table}

\begin{table}[H]
\centering
\scriptsize
\caption{OULAD configured-update follow-up with explicit anchor-weight sweep. Values are nominal-configuration minus reference deltas under matched candidate pools for a 128-learner diagnostic run. Fixed \textsc{ALS} is unchanged by construction because it is not updated; zero change is not a utility comparison.}
\label{tab:rq2-dp-resilience-oulad-lambda}
\begin{adjustbox}{max width=\textwidth}
\begin{tabular}{llllllr}
\toprule
Dataset & Policy & Privacy Setting & $\Delta$ Corr. & $\Delta$ Post-test ($\times 10^{-3}$) & $\Delta$ Uptake & $n$ \\
\midrule
OULAD & Adaptive & Standard & -0.027 [-0.040, -0.016] & -0.373 [-0.768, -0.004] & -0.024 [-0.029, -0.019] & 128 \\
OULAD & Adaptive+ALS ($w=0.50$) & Standard & -0.025 [-0.038, -0.012] & -0.945 [-1.426, -0.499] & -0.028 [-0.036, -0.021] & 128 \\
OULAD & Adaptive+ALS ($w=0.75$) & Standard & -0.005 [-0.018, 0.006] & -0.227 [-0.641, 0.122] & -0.007 [-0.011, -0.003] & 128 \\
OULAD & ALS & Standard & 0.000 [0.000, 0.000] & 0.000 [0.000, 0.000] & 0.000 [0.000, 0.000] & 128 \\
OULAD & Adaptive & Strong & -0.027 [-0.039, -0.016] & -0.475 [-0.848, -0.100] & -0.024 [-0.029, -0.020] & 128 \\
OULAD & Adaptive+ALS ($w=0.50$) & Strong & -0.023 [-0.035, -0.011] & -0.905 [-1.359, -0.478] & -0.026 [-0.034, -0.020] & 128 \\
OULAD & Adaptive+ALS ($w=0.75$) & Strong & -0.007 [-0.020, 0.006] & -0.259 [-0.662, 0.094] & -0.007 [-0.011, -0.003] & 128 \\
OULAD & ALS & Strong & 0.000 [0.000, 0.000] & 0.000 [0.000, 0.000] & 0.000 [0.000, 0.000] & 128 \\
OULAD & Adaptive & Locked & -0.024 [-0.035, -0.014] & -0.431 [-0.789, -0.080] & -0.023 [-0.028, -0.018] & 128 \\
OULAD & Adaptive+ALS ($w=0.50$) & Locked & -0.025 [-0.038, -0.012] & -0.895 [-1.304, -0.486] & -0.028 [-0.036, -0.020] & 128 \\
OULAD & Adaptive+ALS ($w=0.75$) & Locked & -0.007 [-0.019, 0.005] & -0.227 [-0.611, 0.127] & -0.007 [-0.011, -0.004] & 128 \\
OULAD & ALS & Locked & 0.000 [0.000, 0.000] & 0.000 [0.000, 0.000] & 0.000 [0.000, 0.000] & 128 \\
\bottomrule
\end{tabular}

\end{adjustbox}
\end{table}

\begin{table}[H]
\centering
\scriptsize
\caption{EdNet configured-update table with explicit anchor-weight endpoints on the 10k-user processed KT2 subset. Values are nominal-configuration minus reference deltas under matched candidate pools; the mixed pattern is a closed-loop stress test rather than a privacy--utility curve.}
\label{tab:rq2-dp-resilience-ednet}
\begin{adjustbox}{max width=\textwidth}
\begin{tabular}{llllllr}
\toprule
Dataset & Policy & Privacy Setting & $\Delta$ Corr. & $\Delta$ Post-test ($\times 10^{-3}$) & $\Delta$ Uptake & $n$ \\
\midrule
EdNet & Adaptive & Standard & 0.008 [-0.007, 0.026] & 0.391 [-0.066, 0.878] & 0.015 [0.010, 0.020] & 128 \\
EdNet & Adaptive+ALS ($w=0.50$) & Standard & -0.009 [-0.023, 0.003] & -0.712 [-1.171, -0.233] & -0.009 [-0.014, -0.004] & 128 \\
EdNet & Adaptive+ALS ($w=0.75$) & Standard & -0.013 [-0.024, -0.001] & -0.410 [-0.818, 0.021] & -0.011 [-0.016, -0.006] & 128 \\
EdNet & ALS & Standard & 0.000 [0.000, 0.000] & 0.000 [0.000, 0.000] & 0.000 [0.000, 0.000] & 128 \\
EdNet & Adaptive & Strong & 0.008 [-0.005, 0.021] & 0.103 [-0.356, 0.498] & 0.012 [0.008, 0.016] & 128 \\
EdNet & Adaptive+ALS ($w=0.50$) & Strong & -0.012 [-0.025, 0.001] & -0.794 [-1.173, -0.357] & -0.012 [-0.017, -0.007] & 128 \\
EdNet & Adaptive+ALS ($w=0.75$) & Strong & -0.013 [-0.025, -0.000] & -0.385 [-0.817, 0.078] & -0.010 [-0.017, -0.005] & 128 \\
EdNet & ALS & Strong & 0.000 [0.000, 0.000] & 0.000 [0.000, 0.000] & 0.000 [0.000, 0.000] & 128 \\
EdNet & Adaptive & Locked & 0.003 [-0.010, 0.015] & -0.163 [-0.614, 0.237] & 0.011 [0.007, 0.015] & 128 \\
EdNet & Adaptive+ALS ($w=0.50$) & Locked & -0.001 [-0.015, 0.012] & -0.528 [-0.905, -0.145] & -0.013 [-0.018, -0.008] & 128 \\
EdNet & Adaptive+ALS ($w=0.75$) & Locked & -0.015 [-0.027, -0.002] & -0.528 [-0.967, -0.080] & -0.009 [-0.014, -0.004] & 128 \\
EdNet & ALS & Locked & 0.000 [0.000, 0.000] & 0.000 [0.000, 0.000] & 0.000 [0.000, 0.000] & 128 \\
\bottomrule
\end{tabular}

\end{adjustbox}
\end{table}

\begin{table}[H]
\centering
\scriptsize
\caption{EdNet applied score-noise downstream stress test. Gaussian noise is injected into the adaptive score component and routed into actual slate selection; for \textsc{Adaptive+ALS}, the injected scale is multiplied by the adaptive blend weight, while the pure \textsc{ALS} endpoint receives no adaptive-component noise. Values are noisy $-$ clean deltas under matched candidate pools.}
\label{tab:ednet-applied-score-noise}
\begin{adjustbox}{max width=\textwidth}
\begin{tabular}{llllllllr}
\toprule
$\sigma_{\mathrm{score}}$ & $w$ & Policy & $\Delta$ Corr. & $\Delta$ Post-test ($\times 10^{-3}$) & $\Delta$ Uptake & Overlap & Kendall $\tau$ & $n$ \\
\midrule
0.02 & 0.00 & Adaptive & 0.007 & 0.173 & 0.010 & 0.809 & 0.904 & 128 \\
0.02 & 0.50 & Adaptive+ALS ($w=0.50$) & -0.004 & -0.294 & -0.006 & 0.944 & 0.989 & 128 \\
0.02 & 0.75 & Adaptive+ALS ($w=0.75$) & 0.005 & 0.351 & 0.009 & 0.913 & 0.993 & 128 \\
0.02 & 1.00 & ALS & 0.000 & 0.000 & 0.000 & 1.000 & 1.000 & 128 \\
0.05 & 0.00 & Adaptive & 0.008 & 0.093 & 0.002 & 0.827 & 0.870 & 128 \\
0.05 & 0.50 & Adaptive+ALS ($w=0.50$) & 0.000 & 0.061 & 0.004 & 0.954 & 0.972 & 128 \\
0.05 & 0.75 & Adaptive+ALS ($w=0.75$) & 0.009 & 0.485 & 0.025 & 0.916 & 0.981 & 128 \\
0.05 & 1.00 & ALS & 0.000 & 0.000 & 0.000 & 1.000 & 1.000 & 128 \\
0.10 & 0.00 & Adaptive & 0.018 & 0.683 & 0.017 & 0.812 & 0.843 & 128 \\
0.10 & 0.50 & Adaptive+ALS ($w=0.50$) & 0.007 & 0.178 & 0.009 & 0.949 & 0.945 & 128 \\
0.10 & 0.75 & Adaptive+ALS ($w=0.75$) & -0.006 & -0.152 & 0.008 & 0.909 & 0.964 & 128 \\
0.10 & 1.00 & ALS & 0.000 & 0.000 & 0.000 & 1.000 & 1.000 & 128 \\
0.20 & 0.00 & Adaptive & 0.008 & 0.120 & 0.008 & 0.813 & 0.748 & 128 \\
0.20 & 0.50 & Adaptive+ALS ($w=0.50$) & 0.002 & -0.079 & 0.005 & 0.932 & 0.899 & 128 \\
0.20 & 0.75 & Adaptive+ALS ($w=0.75$) & -0.003 & 0.004 & -0.004 & 0.902 & 0.928 & 128 \\
0.20 & 1.00 & ALS & 0.000 & 0.000 & 0.000 & 1.000 & 1.000 & 128 \\
\bottomrule
\end{tabular}

\end{adjustbox}
\end{table}

\begin{table}[H]
\centering
\scriptsize
\caption{EdNet 25k locked-tier follow-up. Values are DP $-$ non-private deltas under matched candidate pools for a larger processed KT2 subset. The result preserves the mixed EdNet pattern: locked DP does not induce a clear unanchored degradation regime, and anchor effects remain small in closed-loop metrics.}
\label{tab:ednet25k-locked-followup}
\begin{adjustbox}{max width=\textwidth}
\begin{tabular}{llllllr}
\toprule
Dataset & Policy & Privacy Setting & $\Delta$ Corr. & $\Delta$ Post-test ($\times 10^{-3}$) & $\Delta$ Uptake & $n$ \\
\midrule
EdNet-25k & Adaptive & Locked & 0.011 [-0.002, 0.024] & 0.002 [-0.524, 0.499] & 0.008 [0.001, 0.014] & 128 \\
EdNet-25k & Adaptive+ALS ($w=0.50$) & Locked & -0.001 [-0.013, 0.008] & 0.040 [-0.348, 0.469] & 0.004 [-0.001, 0.008] & 128 \\
EdNet-25k & Adaptive+ALS ($w=0.75$) & Locked & 0.003 [-0.009, 0.016] & 0.500 [0.097, 0.919] & 0.011 [0.006, 0.016] & 128 \\
EdNet-25k & ALS & Locked & 0.000 [0.000, 0.000] & 0.000 [0.000, 0.000] & 0.000 [0.000, 0.000] & 128 \\
\bottomrule
\end{tabular}

\end{adjustbox}
\end{table}

\begin{table}[H]
\centering
\scriptsize
\caption{High- and low-engagement subgroup gaps under the closed-loop simulator.}
\label{tab:subgroup-gap-app}
\begin{adjustbox}{max width=\textwidth}
\begin{tabular}{llll}
\toprule
Dataset & $\Delta$ Corr. (High$-$Low) & $\Delta$ Post-test (High$-$Low) ($\times 10^{-3}$) & $\Delta$ Uptake (High$-$Low) \\
\midrule
OULAD & 0.013 & -8.235 & 0.216 \\
EdNet & -0.006 & 3.196 & 0.031 \\
\bottomrule
\end{tabular}

\end{adjustbox}
\end{table}

\end{document}